\def\eqref#1{equation~\ref{#1}}
\def\1{\bm{1}}
\DeclareMathAlphabet{\mathsfit}{\encodingdefault}{\sfdefault}{m}{sl}
\SetMathAlphabet{\mathsfit}{bold}{\encodingdefault}{\sfdefault}{bx}{n}
\newcommand{\E}{\mathbb{E}}
\newcommand{\R}{\mathbb{R}}
\DeclareMathOperator*{\argmax}{arg\,max}
\DeclareMathOperator*{\argmin}{arg\,min}
\pgfplotsset{compat=1.18}
\pgfplotsset{
  myaxis/.style={
    width=0.39\linewidth,
    height=0.28\linewidth,
    label style={font=\scriptsize},
    tick label style={font=\scriptsize},
    title style={font=\footnotesize},
    grid=both,
    legend style={
      at={(0.5,1.15)}, anchor=south, legend columns=3, font=\scriptsize
    }
  }
}
\newtheorem{definition}{Definition}
\renewcommand{\eqref}[1]{(\ref{#1})}
\newtheorem{assumption}{Assumption}
\newtheorem{theorem}{Theorem}
\newtheorem{remark}{Remark}
\newtheorem{lemma}{Lemma}
\definecolor{antiquewhite}{rgb}{0.98, 0.92, 0.84} 
\definecolor{blizzardblue}{rgb}{0.67, 0.9, 0.93}
\title{Neyman-Pearson Classification under Both Null and Alternative Distributions Shift}
\author{
Mohammadreza M. Kalan\\
Univ Rennes, Ensai, CNRS,\\ CREST–UMR 9194,
F-35000 Rennes, France\\
\texttt{mohammadreza.kalan@ensai.fr}
\And
Yuyang Deng\\
Columbia University, Department of Statistics\\
\texttt{yd2824@columbia.edu}
\And
Eitan J. Neugut\\
Columbia University, Department of Statistics\\
\texttt{eitan.neugut@columbia.edu}
\And
Samory Kpotufe\\
Columbia University, Department of Statistics\\
\texttt{samory@columbia.edu}
}
\newcommand{\cbr}[1]{\left\{ #1 \right\}}
\newcommand{\pare}[1]{\left( #1 \right)}
\newcommand{\norm}[1]{\left\| #1 \right\|}
\newcommand{\inprod}[2]{\ensuremath{\langle #1 , \, #2 \rangle}}
\begin{document}

\maketitle

\begin{abstract}
We consider the problem of transfer learning in Neyman–Pearson classification, where the objective is to minimize the error w.r.t. a distribution $\mu_1$, subject to the constraint that the error w.r.t. a distribution $\mu_0$ remains below a prescribed threshold. While transfer learning has been extensively studied in traditional classification, transfer learning in imbalanced classification such as Neyman–Pearson classification has received much less attention. This setting poses unique challenges, as both types of errors must be simultaneously controlled. Existing works address only the case of distribution shift in $\mu_1$, whereas in many practical scenarios shifts may occur in both $\mu_0$ and $\mu_1$. We derive an adaptive procedure that not only guarantees improved Type-I and Type-II errors when the source is informative, but 
also automatically adapt to situations where the source is uninformative, thereby avoiding negative transfer. In addition to such statistical guarantees, the procedures is efficient, 
as shown via complementary computational guarantees.  



\end{abstract}

\section{Introduction}
In many applications, the objective is to learn a decision rule from data that separates two classes with distributions $\mu_0$ and $\mu_1$, whose sample sizes are often imbalanced. These applications include disease diagnosis \citep{myszczynska2020applications,bourzac2014diagnosis}, malware detection in cybersecurity \citep{alamro2023automated,kumar2019edima}, and climate science, such as heavy rain detection \citep{folino2023learning,frame2022deep}. In such settings, the learner must control the error w.r.t. both classes. This imbalance also induces an asymmetry in the importance of the two types of errors. A natural framework to address this asymmetry is Neyman–Pearson (NP) classification \citep{cannon2002learning,scott2005neyman}, where the objective is to minimize the error with respect to $\mu_1$, subject to the constraint that the error with respect to $\mu_0$ does not exceed a pre-specified threshold $\alpha$. To mitigate data scarcity in the primary classification task (referred to as the \emph{target}), where samples from either class may be limited, one can make use of additional datasets (referred to as the \emph{source}). However, while transfer learning has been extensively studied in traditional classification, how to effectively leverage source data in the NP classification has received far less attention.

In this work, we propose a provable procedure that effectively leverages the source under potential distribution shifts in both $\mu_0$ and $\mu_1$ to improve both Type-I error, i.e., the error w.r.t. $\mu_0$, and Type-II error, i.e., the error w.r.t. $\mu_1$. Prior theoretical works \citep{kalan2025transfer,kalantight} address only the special case where the source shares the same $\mu_0$ with the target and the shift occurs solely in $\mu_1$. However, in many applications, shifts may occur in both distributions, raising the challenge of how to exploit the source to improve both types of errors simultaneously. 

A key difficulty in the presence of a shift in $\mu_0$ is that a classifier satisfying the source constraint, i.e., achieving source Type-I error below $\alpha$, does not necessarily satisfy the target constraint. Especially when only a few target samples are available from $\mu_0$, the set of classifiers that meet the empirical $\alpha$ constraint on these samples may still substantially exceed the true population risk w.r.t. $\mu_0$. Moreover, one cannot straightforwardly use the source to rule out classifiers that result in large population error w.r.t. $\mu_0$ in target, since the correspondence between the source constraint and the target constraint is unknown. Thus, the central challenge is how to leverage source samples to control the target Type-I error while simultaneously improving the target Type-II error—an issue that does not arise in the setting considered by prior works \citep{kalan2025transfer,kalantight}.

The procedure proposed in this work adaptively leverages the source without requiring any prior knowledge about its relatedness to the target. When the source is informative, it improves both types of errors on the target; when the source is uninformative, it matches the performance of using only target data, thereby avoiding negative transfer. Moreover, the generalization error bound derived in this work recovers the bounds established in prior works \citep{kalan2025transfer,kalantight} in the special case where the source and target share the same distribution $\mu_0$. The proposed procedure consists of two stages. In the first stage, it utilizes source class-0 samples to determine an effective threshold $\hat{\alpha}_S$ that aligns the source Type-I constraint with the target constraint. This step rules out classifiers that would exceed the target Type-I constraint $\alpha$, while retaining those capable of achieving low Type-II error on the target. In the second stage, it leverages source class-1 samples to further reduce the Type-II error on the target while ensuring that the Type-I error remains below the prescribed threshold.


In addition to a statistical guarantee for the proposed procedure, we establish a computational guarantee by reformulating the learning procedure as a constrained optimization problem. Specifically, we design a computationally feasible procedure that reduces the learning procedure to a sequence of convex programs and leverages a stochastic convex optimization solver as an algorithmic component. We show that this optimization procedure results in a model that achieves the statistical guarantee in polynomial time.
\vspace{-2mm}
\section{Related Work}
Transfer learning has been extensively studied in traditional classification and regression \citep{li2022transfer,cai2021transfer,kpotufe2021marginal,tripuraneni2020theory,mousavi2020minimax,ben2010theory,ben2006analysis,mansour2009domain}. In the context of binary classification over a hypothesis class, \citep{hanneke2019value} introduce the notion of transfer exponent, which translates the performance of a classifier from the source domain to the target domain, and achieve the minimax rate adaptively in balanced classification. However, none of these works address imbalanced classification, where the central challenge is to simultaneously control both Type-I and Type-II errors.

The problem of NP classification when the underlying distributions are unknown except through samples was first formulated by \citet{cannon2002learning, scott2005neyman}, who considered empirical risk minimization with respect to one class while constraining the empirical error on the other below a pre-specified threshold. \citet{rigollet2011neyman} studied the same problem under a surrogate convex loss. \citet{tong2013plug} studied a nonparametric NP classification framework and established rates of convergence for a plug-in approach based on estimating the class distributions. More recently, \citet{kalan2024distribution} derived distribution-free minimax rates for NP classification and showed that, unlike traditional classification, the problem exhibits a dichotomy between fast and slow rates.

Related to this work, in the context of transfer learning in NP classification, \citet{kalantight} derived minimax rates for $0$-$1$ loss. Building on this, \citet{kalan2025transfer} introduced an implementable transfer learning procedure for NP classification using a surrogate loss and derived upper bound guarantees. Both \citet{kalantight,kalan2025transfer} focused on the restricted setting where the source and target share the class-$0$ distribution $\mu_0$ and the shift occurs only in $\mu_1$. Moreover, they established only statistical guarantees without addressing computational aspects. However, in this work, we consider the general setting where distribution shifts may occur in both $\mu_0$ and $\mu_1$. We propose an adaptive procedure with statistical guarantees that also recovers the special case in \citet{kalantight,kalan2025transfer}. On the computational side, we reformulate the learning problem within a two-stage convex programming framework and develop a concrete stochastic optimization procedure that outputs a model with the desired excess-risk guarantee with a bound on the gradient complexity of the procedure.

\vspace{-2mm}
\section{Setup}
Let $(\mathcal{X},\Sigma)$ be a measurable space, and let $\mathcal{H}$ denote a hypothesis class of measurable functions $h:\mathcal{X}\to\mathbb{R}$. In the binary classification setting, each $h\in\mathcal{H}$ induces a decision rule of the form $\mathds{1}\{h(x)\ge 0\}$, which assigns a sample $x$ to either class $0$ or class $1$. In this paper, we adopt the Neyman–Pearson classification framework. Classes $0$ and $1$ are generated according to distributions $\mu_0$ and $\mu_1$, respectively, and the goal is to learn a function $\hat{h}\in\mathcal{H}$ from samples drawn from these distributions. The classifier $\hat{h}$ is required to keep the error w.r.t. class $0$ (i.e., the Type-I error) below a pre-specified threshold $\alpha$, while minimizing the error w.r.t. class $1$ (i.e., the Type-II error). We first introduce a loss function w.r.t. which the Type-I and Type-II errors are defined.
\begin{definition}[Surrogate Loss]\label{def_loss}
We call a function $\varphi:\R\to\R_{+}$ an $L$-Lipschitz surrogate loss whenever the following hold: 
$\varphi$ is monotone nondecreasing and normalized by $\varphi(0)=1$; it satisfies the Lipschitz bound $|\varphi(x)-\varphi(y)|\le L|x-y|$ for all $x,y\in\R$; and there exists some constant $C>0$ such that, for every $h\in\mathcal{H}$ and $x\in\mathcal{X}$, we have  $
\max\{\varphi(h(x)),\,\varphi(-h(x))\}\le C.$
\end{definition}
Next, we define Type-I and Type-II errors w.r.t. a surrogate loss.
\begin{definition}
For a surrogate loss $\varphi$, the $\varphi$-Type-I and $\varphi$-Type-II errors of $h$ are defined as
$R_{\varphi,\mu_0}(h)=\E_{\mu_0}[\varphi(h(X))]$ and
$R_{\varphi,\mu_1}(h)=\E_{\mu_1}[\varphi(-h(X))]$, respectively.
\end{definition}
In particular, for the indicator loss $\varphi(z)=\mathds{1}\{z\ge 0\}$, the errors $R_{\varphi,\mu_0}$ and $R_{\varphi,\mu_1}$ reduce to the standard Type-I and Type-II errors.

We consider a transfer learning setting with source and target domains, where for each $D\in\{S,T\}$ the class $0$ and class $1$ distributions are denoted by $\mu_{0,D}$ and $\mu_{1,D}$. The Neyman–Pearson classification problem w.r.t. a surrogate loss $\varphi$ and a pre-specified threshold $\alpha$ in each domain $D\in\{S,T\}$ is defined as
\begin{equation}\label{def-neyman-pearson-cls}
\min_{h \in \mathcal{H}} \; R_{\varphi,\mu_{1,D}}(h) \quad 
\text{s.t. } R_{\varphi,\mu_{0,D}}(h) \le \alpha
\end{equation}
and we denote by $h^*_{D,\alpha}$ a (not necessarily unique) solution to \eqref{def-neyman-pearson-cls}. \cite{kalan2025transfer,kalantight} consider the special case where $\mu_{0,S}=\mu_{0,T}$, whereas in this paper we address the more general setting in which $\mu_{0,S}$ and $\mu_{0,T}$ may differ. Furthermore, note that when $\mathcal{H}$ contains all measurable functions from $\mathcal{X}$ to $\mathbb{R}$ and $\varphi$ is the indicator loss, the Neyman--Pearson Lemma \citep{lehmann1986testing} characterizes the solution to \eqref{def-neyman-pearson-cls} as $h^*_{D,\alpha} = 2\mathds{1}\!\left\{\tfrac{p_{0,D}(x)}{p_{1,D}(x)} \ge \lambda\right\} - 1$,
for a suitable $\lambda$, under mild regularity conditions, where $p_{0,D}$ and $p_{1,D}$ denote the class-conditional densities of classes $0$ and $1$ in domain $D\in\{S,T\}$.

In a practical setting, the learner does not have access to the distributions $\mu_{0,D}$ and $\mu_{1,D}$ except through observing i.i.d. samples from each class in both domains: $\{X^{(0,D)}_i\}_{i=1}^{n_{0,D}}\!\sim\mu_{0,D}$ and $\{X^{(1,D)}_i\}_{i=1}^{n_{1,D}}\!\sim\mu_{1,D}$ for $D\in\{S,T\}$. Then, the corresponding empirical $\varphi$-Type-I and $\varphi$-Type-II errors for $D\in\{S,T\}$ are
\[
\widehat R_{\varphi,\mu_{0,D}}(h):=\frac{1}{n_{0,D}}\sum_{i=1}^{n_{0,D}}\varphi\!\big(h(X^{(0,D)}_i)\big),
\qquad
\widehat R_{\varphi,\mu_{1,D}}(h):=\frac{1}{n_{1,D}}\sum_{i=1}^{n_{1,D}}\varphi\!\big(-h(X^{(1,D)}_i)\big).
\]

The goal of the learner in a transfer learning setting is to learn a function $\hat{h}\in \mathcal{H}$ using $n_{0,S}, n_{1,S}, n_{0,T}, n_{1,T}$ samples from the source and target domains, such that it performs well on the \emph{target} domain. Specifically, the learner minimizes the target $\varphi$-Type-II excess error
\[
\mathcal{E}_{1,T}(\hat h)
:= \big[ R_{\varphi,\mu_{1,T}}(\hat h) - R_{\varphi,\mu_{1,T}}(h^*_{T,\alpha}) \big]_+,
\qquad [u]_+ := \max\{0,u\},
\]
subject to the $\varphi$-Type-I error constraint $
R_{\varphi,\mu_{0,T}}(\hat{h}) \leq \alpha + \epsilon_{0,T},$
where $\epsilon_{0,T}$ is of order $n_{0,T}^{-1/2}$.

In this paper, we aim to develop an adaptive procedure that, without requiring any prior knowledge about the relatedness between source and target, effectively leverages both types of samples with two guarantees. First, regardless of whether the source is related to the target, it is as good as using only target samples and ignoring the source. Hence, it avoids negative transfer. Second, whenever the source is related to the target, it improves upon the performance of using only target samples. By considering the empirical counter part of \eqref{def-neyman-pearson-cls} in the target domain
\begin{equation}
\begin{aligned}\label{def-neyman-pearson-empirical}
\min_{h \in \mathcal{H}} \; \hat{R}_{\varphi,\mu_{1,T}}(h) \quad 
\text{s.t. } \hat{R}_{\varphi,\mu_{0,T}}(h) \le \alpha+\epsilon_{0,T}
\end{aligned}
\end{equation}
where $\epsilon_{0,T} = \tfrac{\tilde{C}}{\sqrt{n_{0,T}}}$ for a constant $\tilde{C}$ specified later, it can be shown that $\mathcal{E}_{1,T}(\hat h)\lesssim \frac{1}{\sqrt{n_{1,T}}}$ and $R_{\varphi,\mu_{0,T}}(\hat{h})-\alpha \lesssim\frac{1}{\sqrt{n_{0,T}}}$ \citep{cannon2002learning,scott2005neyman}.

In the transfer learning setting, \cite{kalan2025transfer,kalantight} develop adaptive procedures for the special case where $\mu_{0,T}=\mu_{0,S}$. They stablish the bounds $R_{\varphi,\mu_{0,T}}(\hat{h})-\alpha\lesssim \frac{1}{\sqrt{n_{0}}}$, where $n_0$ denotes the number of samples from $\mu_0=\mu_{0,S}=\mu_{0,T}$, and  $$\mathcal{E}_{1,T}(\hat h)\lesssim \min\{\frac{1}{\sqrt{n_{1,T}}},R_{\varphi,\mu_{1,T}}(h^*_{S,\alpha})-R_{\varphi,\mu_{1,T}}(h^*_{T,\alpha})+(\frac{1}{\sqrt{n_{1,S}}})^{1/\rho}\}$$
where $\rho$ is the transfer exponent \citep{hanneke2019value}, which quantifies the relatedness between source and target.

When $\mu_{0,S}\neq \mu_{0,T}$ the feasible set in \eqref{def-neyman-pearson-cls} under the target constraint need not coincide with its source counterpart. Therefore, a classifier that satisfies the source $\varphi$-Type-I at level $\alpha$ need not satisfy the target constraint. More precisely, let $\mathcal{H}_{D}(\alpha)\coloneqq \left\{h\in\mathcal{H} : R_{\varphi,\mu_{0,D}}(h)\leq \alpha \right\}$ for $D\in \{S,T\}$. If $\mu_{0,S}\neq \mu_{0,T}$, then $h\in \mathcal{H}_{S}(\alpha)$ does not imply $h\in \mathcal{H}_{T}(\alpha)$. 

Furthermore, since the learner only observes samples rather than having direct access to the distributions, the empirical set $\hat{\mathcal{H}}_{T}(\alpha):=\left\{h\in\mathcal{H} : \hat{R}_{\varphi,\mu_{0,T}}(h)\leq \alpha+\epsilon_{0,T} \right\}$ may yield a $\varphi$-Type-I error substantially larger than $\alpha$, especially when $n_{0,T}$ is small and consequently $\epsilon_{0,T}$ is large. In addition, one does not know in advance an appropriate value $\alpha'\in[0,1]$ in the source domain such that exploiting $\hat{\mathcal{H}}_{S}(\alpha')=\left\{h\in\mathcal{H} : \hat{R}_{\varphi,\mu_{0,S}}(h)\leq \alpha'+\epsilon_{0,S} \right\}$, where $\epsilon_{0,S}=\frac{\tilde{C}}{\sqrt{n_{0,S}}}$, would lead to improved $\varphi$-Type-I performance in target. Choosing too small an $\alpha'$ in source may reduce the $\varphi$-Type-I error in target, but at the cost of a substantially larger $\varphi$-Type-II error in target.

These challenges motivate the need for an adaptive transfer learning procedure, which we develop in the next section.
\vspace{-2mm}
\section{Adaptive transfer learning procedure}\label{section-TL-Procedure}

We start with defining the smallest $\alpha'\in[0,1]$ such that the constrained set in the source, i.e. $\mathcal{H}_{S}(\alpha')$ contains the optimal function in the target as:
\begin{align}\label{alpha_s_definition}
    \alpha_{S}:=\inf\{\alpha':\mathcal{H}_{S}(\alpha')\cap T^*(\alpha)\neq\emptyset\}
\end{align}
where $T^*(\alpha)\subset\mathcal{H}$  denote the set of solutions of Target problem \eqref{def-neyman-pearson-cls}. Next, we define the set of functions whose empirical $\varphi$-Type-I error in the target satisfies the $\alpha$ constraint, and whose empirical $\varphi$-Type-II error in the target domain does not exceed that of the corresponding empirical risk minimizer. Specifically,  
\begin{align}\label{equation_H_Star}
\hat{\mathcal{H}}^*_{\alpha,T}
:=\Big\{h\in \hat{\mathcal{H}}_T(\alpha):
\hat{R}_{\varphi,\mu_{1,T}}(h)\leq 
\hat{R}_{\varphi,\mu_{1,T}}(\hat{h}^*_{T,\alpha-\epsilon_{0,T}})
+6\epsilon_{1,T}\Big\},
\end{align}
where $\epsilon_{1,T}=\frac{\tilde{C}}{\sqrt{n_{1,T}}}$ and $\hat{h}^*_{T,\alpha-\epsilon_{0,T}}:=\argmin\limits_{h\in \hat{\mathcal{H}}_{T}(\alpha-\epsilon_{0,T})}
\hat{R}_{\varphi,\mu_{1,T}}(h)$ . The population $\varphi$-Type-I error of functions in $\hat{\mathcal{H}}^*_{\alpha,T}$ can substantially exceed $\alpha$ when the number of target samples from $\mu_{0,T}$, i.e., $n_{0,T}$, is small. To address this, we exploit the source domain by restricting it to a suitable subset, thereby retaining only functions with lower $\varphi$-Type-I error. Specifically, we first define
\begin{equation}\label{empirical-threshold}
\hat{\alpha}_S
:= \inf\big\{\alpha'\in[\alpha,1]:\ \hat{\mathcal{H}}_{S}(\alpha')\cap \hat{\mathcal{H}}^{*}_{\alpha,T}\neq \emptyset\big\}.
\end{equation}
and then introduce the restricted set $\hat{\mathcal{H}}'=\hat{\mathcal{H}}_{S}(\hat{\alpha}_S)\cap \hat{\mathcal{H}}^*_{\alpha,T}.$ We will show formally later that the functions in $\hat{\mathcal{H}}'$ satisfy the required bound on the $\varphi$-Type-I error. At this stage, we need to further restrict $\hat{\mathcal{H}}'$ to achieve a small $\varphi$-Type-II error. For this purpose, we introduce the following sets for $D\in\{S,T\}$:
\begin{equation}
\hat{\mathcal{H}}'_{1,D}:=\{h\in\hat{\mathcal{H}}':\hat{R}_{\varphi,\mu_{1,D}}(h)\leq \hat{R}^*_{\varphi,\mu_{1,D}}(\hat{\mathcal{H}}')+2\epsilon_{1,D}\}
\end{equation}
where $\epsilon_{1,D}=\frac{\tilde{C}}{\sqrt{n_{1,D}}}$ and $\hat{R}^*_{\varphi,\mu_{1,D}}(\hat{\mathcal{H}}')=\min\limits_{h\in\hat{\mathcal{H}}'} \hat{R}_{\varphi,\mu_{1,D}}(h).$ Now, equipped with these definitions, we propose the following transfer learning procedure, which outputs a function $\hat{h}\in \mathcal{H}$.

\begin{boxedminipage}{\textwidth}
\begin{align}\label{procedure_intersection}
&\text{If } \hat{\mathcal{H}}'_{1,S}\cap \hat{\mathcal{H}}'_{1,T}\neq \emptyset, \text{then choose } \hat{h}\in \hat{\mathcal{H}}'_{1,S}\cap \hat{\mathcal{H}}'_{1,T}. 
    \nonumber\\[6pt]
&\text{Otherwise, choose } \hat{h}=\argmin_{h\in\hat{\mathcal{H}}'} \hat{R}_{\varphi,\mu_{1,T}}(h).
\end{align}
\end{boxedminipage}

\section{Main Results}
\subsection{Generalization Error Bounds}\label{section_Generalization_bounds}

We begin with the definition of Rademacher complexity, a capacity measure used to ensure uniform convergence guarantees between the empirical and population risks.

\begin{definition}[Rademacher Complexity \citep{bartlett2002rademacher}]
    Let $X_1,\ldots,X_n$ be i.i.d.\ random variables drawn from a distribution $\mu$ on $\mathcal{X}$. Let $\sigma_1,\ldots,\sigma_n$ be independent Rademacher variables, i.e.\ random signs with 
$\Pr(\sigma_i=+1)=\Pr(\sigma_i=-1)=\tfrac{1}{2}$. The empirical Rademacher complexity of $\mathcal{H}$ is defined as
$\widehat{R}_n(\mathcal{H})
= \mathbb{E}_{\sigma}\!\left[ \sup\limits_{h \in \mathcal{H}}
\frac{1}{n}\sum_{i=1}^n \sigma_i h(X_i) \right].$
The Rademacher complexity of $\mathcal{H}$ is then $
R_n(\mathcal{H}) = \mathbb{E}_{X_1^n}\!\left[ \widehat{R}_n(\mathcal{H}) \right].$
\end{definition}
\begin{assumption}[Class Complexity]\label{assump_rademacher}
We assume that the complexity of the hypothesis class satisfies $
R_n(\mathcal{H}) \leq\frac{B_{\mathcal{H}}}{\sqrt{n}}$ 
for some constant $B_{\mathcal{H}}$ capturing the complexity of $\mathcal{H}$.
\end{assumption} 
Note that when the input features are bounded, many hypothesis classes used in practice—such as linear models or neural networks with bounded parameters—satisfy Assumption \ref{assump_rademacher} \citep{golowich2018size}. Furthermore, we make the following convexity assumption on the hypothesis class, which, together with the convexity of the loss, ensures that small deviations in the $\varphi$-Type-I error do not lead to large deviations in the $\varphi$-Type-II error. The bounds can also be derived without this assumption, though it yields tighter bounds.
\begin{assumption}[Class Convexity]\label{assump-convexity}
    The class $\mathcal{H}$ is convex: given any two hypotheses $h_{1}, h_{2} \in \mathcal{H}$ and any $\theta \in (0,1)$, we have 
    $\theta h_{1} + (1-\theta)h_{2}\in \mathcal{H}.$
\end{assumption}
Note that polynomial regression functions and majority votes over a set of basis functions are examples that satisfy Assumption \ref{assump-convexity}. In contrast, a neural network class with a fixed architecture is generally not closed under convex combinations. Nevertheless, because the Rademacher complexity of a class matches that of its convex hull, the convex hull of a neural network class can be considered instead, which is convex \citep{kalan2025transfer,rigollet2011neyman}

Next, we need a notion of distance between the source and target that translates the performance of a function $h \in \mathcal{H}$ on the source to its performance on the target. We first introduce a general notion of distance and then derive bounds on the generalization error in terms of this notion. This formulation is broad enough to also yield bounds expressed through the commonly used notion of transfer exponent \citep{hanneke2019value,kalantight}.

Let us define $\mathcal{H}^*_{S,T,\alpha}
:= \argmin\limits_{h \in \mathcal{H}_{S}(\alpha)\cap\mathcal{H}_{T}(\alpha)} 
R_{\varphi,\mu_{1,S}}(h),$ which denotes the set of solutions minimizing the source $\varphi$-Type-II risk within the intersection of $\alpha$ feasible sets in the source and target. Among these solutions, let $h^*_{S,T,\alpha}
:= \argmax_{h\in \mathcal{H}^*_{S,T,\alpha}} 
R_{\varphi,\mu_{1,T}}(h)$ denote the pivoting function serving as the reference for comparing errors across domains in the definition of transfer distance. We then define the excess error of any hypothesis $h\in\mathcal{H}$ w.r.t. this pivoting function, for $D\in\{S,T\}$, as  
\begin{equation}
\mathcal{E}_{1,D}(h \mid h^*_{S,T,\alpha})
:= R_{\varphi,\mu_{1,D}}(h)-R_{\varphi,\mu_{1,D}}(h^*_{S,T,\alpha}).
\end{equation}
Using this, we define the transfer modulus, which translates the performance of a function $h\in\mathcal{H}$ from source to target: 
\begin{definition}[Transfer Modulus]\label{def_Transfer_Modulus}
For \(\varepsilon \ge 0\), the transfer moduli for $\varphi$-Type-I and $\varphi$-Type-II errors are defined as follows
\[
\phi_{1}^{S\to T}(\varepsilon)
:= \sup\Big\{
\mathcal{E}_{1,T}(h \mid h^*_{S,T,\alpha})
:\; h\in\mathcal{H},\ \mathcal{E}_{1,S}(h \mid h^*_{S,T,\alpha})\le \varepsilon
\Big\},
\]
\[
\phi_{0}^{S\to T}(\varepsilon)
:= \sup\Big\{
R_{\varphi,\mu_{0,T}}(h) \,:\; h \in \mathcal{H},\ R_{\varphi,\mu_{0,S}}(h) \le \varepsilon
\Big\}.
\]
\end{definition}

Now, equipped with these definitions, we state the following theorem, which provides upper bounds on the generalization error of the proposed transfer learning procedure in Section \ref{section-TL-Procedure}.
\begin{figure}[htbp] 
    \centering
\includegraphics[width=0.7\textwidth, height=.19\textwidth]{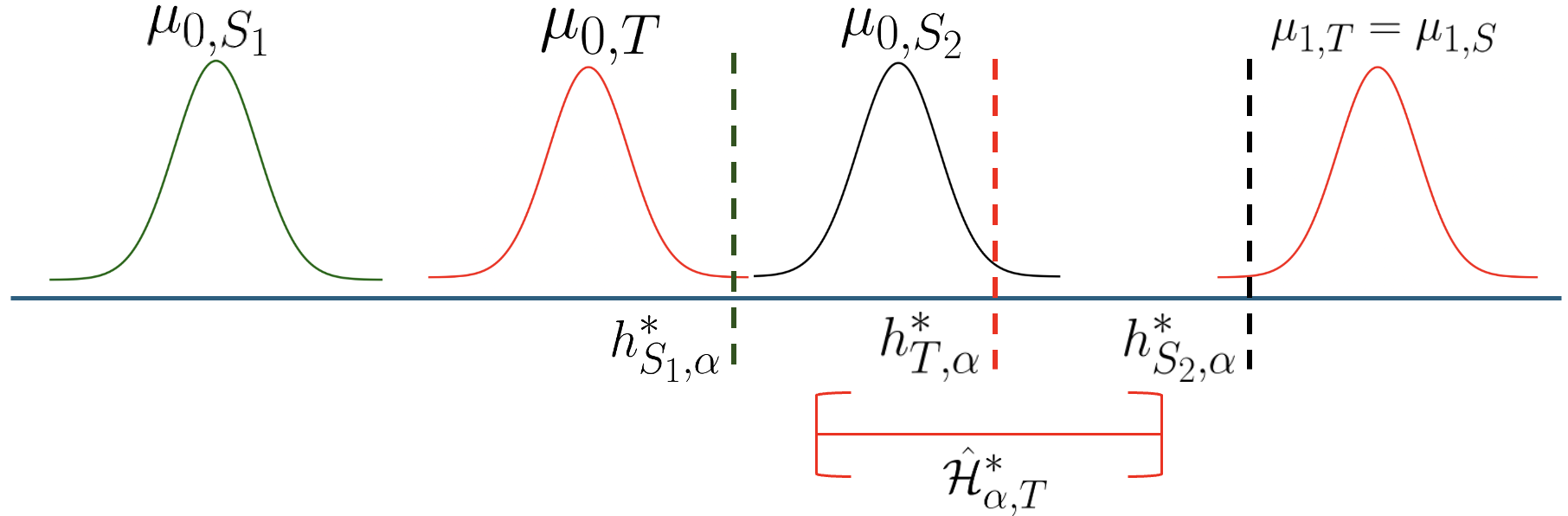} 
    \caption{In this figure, we consider one target and two sources that all share the distribution $\mu_1$, while $\mu_0$ differs across them. All distributions are Gaussian with the same variance but different means. The optimal NP classifiers are denoted by $h^*_{T,\alpha}$, $h^*_{S_1,\alpha}$, and $h^*_{S_2,\alpha}$. Moreover, we assume that there are sufficiently many target samples such that $\hat{\mathcal{H}}^*_{\alpha,T}$ does not intersect with $h^*_{S_1,\alpha}$ or $h^*_{S_2,\alpha}$, which implies that $\alpha_{S_2}>\alpha_{S_1}=\alpha$}
    \label{fig:alpha}
\end{figure}
\vspace{-1mm}
\begin{theorem}\label{theorem_bound}
Suppose that the hypothesis class $\mathcal{H}$ satisfies Assumptions~\ref{assump_rademacher} and \ref{assump-convexity}. Moreover, let $\delta > 0$ and  $\epsilon_{i,D} = \frac{\tilde{C}}{\sqrt{n_{i,D}}}$ for $i \in \{0,1\}$ and $D \in \{S,T\}$, where $
\tilde{C} = 8B_{\mathcal{H}}L + 2C\sqrt{2\log\!\left(\tfrac{2}{\delta}\right)}.$ Assume further that $n_{0,T} \geq n_{1,T}$, that $\mathcal{H}_{T}(\alpha/8)$ is nonempty, and that $n_{0,T}$ is large enough so that $\epsilon_{0,T}\leq \frac{7\alpha}{16}$.  Furthermore, let $\varphi$ be a convex surrogate loss function, and let $\hat{h}$ denote the hypothesis returned by the procedure in Section~\ref{section-TL-Procedure}. Then, with probability at least $1 - 4\delta$, we have:
\begin{align*}
    &\mathcal{E}_{1,T}(\hat{h})
    \leq c\cdot\min\left\{\epsilon_{1,T},\; R_{\varphi,\mu_{1,T}}(h^*_{S,T,\alpha}) 
    - R_{\varphi,\mu_{1,T}}(h^*_{T,\alpha}) + \phi^{S\to T}_{1}(4\epsilon_{1,S})\right\} \\
    &R_{\varphi,\mu_{0,T}}(\hat{h})
    \leq 
    \begin{cases}
        \min\{\alpha+2\epsilon_{0,T},\, \phi^{S\to T}_{0}(\alpha+2\epsilon_{0,S})\}, & \text{if } \alpha \geq \alpha_S, \\[6pt]
        \min\{\alpha+2\epsilon_{0,T},\, \phi^{S\to T}_{0}(\hat{\alpha}_S + 2\epsilon_{0,S})\}, & \text{if } \alpha < \alpha_S.
    \end{cases}
\end{align*}
where $c$ is a universal constant and   $\hat{\alpha}_S$ is the empirical threshold defined in~\eqref{empirical-threshold}. In particular, if $\alpha < \alpha_S$, then $\hat{\alpha}_S \leq \alpha_S$.
\end{theorem}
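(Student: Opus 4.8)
The plan is to condition on one high‑probability uniform‑convergence event and then argue deterministically on top of three ingredients. Let $\mathcal A$ be the intersection over $i\in\{0,1\}$, $D\in\{S,T\}$ of the events $\{\sup_{h\in\mathcal H}|\widehat R_{\varphi,\mu_{i,D}}(h)-R_{\varphi,\mu_{i,D}}(h)|\le\epsilon_{i,D}\}$; by symmetrization, the Lipschitz contraction inequality (since $z\mapsto\varphi(\pm z)$ is $L$‑Lipschitz) and McDiarmid (since $\varphi(\pm h(x))\le C$), together with Assumption~\ref{assump_rademacher}, each event holds with probability $\ge1-\delta$ for the stated $\tilde C$, so $\Pr(\mathcal A)\ge1-4\delta$. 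The second ingredient is a \emph{margin lemma}: given $h$ with $R_{\varphi,\mu_{0,T}}(h)\le\alpha$ and a deep‑interior point $h_0\in\mathcal H_T(\alpha/8)$, convexity of $\mathcal H$ (Assumption~\ref{assump-convexity}) and of $\varphi$ let one pick a mixing weight $\theta\in(0,1)$ with $1-\theta=O(\epsilon_{0,T}/\alpha)$ — admissible because $\epsilon_{0,T}\le 7\alpha/16$ — so that $\theta h+(1-\theta)h_0$ has $\varphi$‑Type‑I error $\le\alpha-2\epsilon_{0,T}$ and $\varphi$‑Type‑II error $\le R_{\varphi,\mu_{1,T}}(h)+(1-\theta)C$; applying this to $h^*_{T,\alpha}$ and to $h^*_{T,\alpha+\epsilon_{0,T}}$ and using $n_{0,T}\ge n_{1,T}$ shows, on $\mathcal A$, that $\widehat R_{\varphi,\mu_{1,T}}(\hat h^*_{T,\alpha-\epsilon_{0,T}})$ lies within $O(\epsilon_{1,T})$ of $R_{\varphi,\mu_{1,T}}(h^*_{T,\alpha})$ on both sides, whence $\hat{\mathcal H}^*_{\alpha,T}\neq\emptyset$ and every $h\in\hat{\mathcal H}^*_{\alpha,T}$ obeys $R_{\varphi,\mu_{0,T}}(h)\le\alpha+2\epsilon_{0,T}$ and $R_{\varphi,\mu_{1,T}}(h)\le R_{\varphi,\mu_{1,T}}(h^*_{T,\alpha})+c'\epsilon_{1,T}$. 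The third ingredient is that $\hat{\mathcal H}'$ and the maps $h\mapsto\widehat R_{\varphi,\mu_{1,D}}(h)$ are convex, so convex combinations of elements of $\hat{\mathcal H}'$ stay in $\hat{\mathcal H}'$ with a predictable effect on the empirical risks. Note $\hat h\in\hat{\mathcal H}'$ in both branches of the procedure.

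For the Type‑I error, $\hat h\in\hat{\mathcal H}'\subseteq\hat{\mathcal H}_T(\alpha)$ gives $R_{\varphi,\mu_{0,T}}(\hat h)\le\alpha+2\epsilon_{0,T}$ via uniform convergence, while $\hat h\in\hat{\mathcal H}_S(\hat\alpha_S)$ gives $R_{\varphi,\mu_{0,S}}(\hat h)\le\hat\alpha_S+2\epsilon_{0,S}$, hence $R_{\varphi,\mu_{0,T}}(\hat h)\le\phi_0^{S\to T}(\hat\alpha_S+2\epsilon_{0,S})$ by Definition~\ref{def_Transfer_Modulus}; this is the second case of the bound. For the first case I must show $\hat\alpha_S=\alpha$ when $\alpha\ge\alpha_S$, together with $\hat\alpha_S\le\alpha_S$ when $\alpha<\alpha_S$. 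Both reduce to one observation: by definition of $\alpha_S$, for every $\eta>0$ there is $g\in\mathcal H_S(\alpha_S+\eta)\cap T^*(\alpha)$, so $R_{\varphi,\mu_{0,T}}(g)\le\alpha$ and $R_{\varphi,\mu_{1,T}}(g)=R_{\varphi,\mu_{1,T}}(h^*_{T,\alpha})$; on $\mathcal A$ this puts $g$ in $\hat{\mathcal H}_T(\alpha)$, the margin‑lemma estimate on $\widehat R_{\varphi,\mu_{1,T}}(\hat h^*_{T,\alpha-\epsilon_{0,T}})$ puts it in $\hat{\mathcal H}^*_{\alpha,T}$, and $\widehat R_{\varphi,\mu_{0,S}}(g)\le\alpha_S+\eta+\epsilon_{0,S}$ puts it in $\hat{\mathcal H}_S(\alpha_S+\eta)$. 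Thus $\hat{\mathcal H}_S(\alpha_S+\eta)\cap\hat{\mathcal H}^*_{\alpha,T}\neq\emptyset$ for all $\eta>0$, giving $\hat\alpha_S\le\alpha_S$; and if $\alpha\ge\alpha_S$ the same argument with $\alpha_S$ replaced by $\alpha$ and $\eta=0$ gives $\hat{\mathcal H}_S(\alpha)\cap\hat{\mathcal H}^*_{\alpha,T}\neq\emptyset$, so $\hat\alpha_S=\alpha$ since $\hat\alpha_S\ge\alpha$ by construction. Taking minima with the trivial bound $\alpha+2\epsilon_{0,T}$ yields the stated expression.

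For the Type‑II error, the $\epsilon_{1,T}$ term is immediate from $\hat h\in\hat{\mathcal H}^*_{\alpha,T}$ and the margin‑lemma consequence above. For the transfer term, fix the pivot $h^*_{S,T,\alpha}$; by Definition~\ref{def_Transfer_Modulus} it suffices to prove $\mathcal E_{1,S}(\hat h\mid h^*_{S,T,\alpha})\le4\epsilon_{1,S}$, since then $\mathcal E_{1,T}(\hat h\mid h^*_{S,T,\alpha})\le\phi_1^{S\to T}(4\epsilon_{1,S})$ and adding and subtracting $R_{\varphi,\mu_{1,T}}(h^*_{S,T,\alpha})$ gives the bound (using $\phi_1^{S\to T}\ge0$). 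If $R_{\varphi,\mu_{1,T}}(h^*_{S,T,\alpha})-R_{\varphi,\mu_{1,T}}(h^*_{T,\alpha})\gtrsim\epsilon_{1,T}$, the $\epsilon_{1,T}$ term already dominates the right‑hand side and there is nothing to do. Otherwise the source is informative: $R_{\varphi,\mu_{1,T}}(h^*_{S,T,\alpha})$ is within $O(\epsilon_{1,T})$ of $R_{\varphi,\mu_{1,T}}(h^*_{T,\alpha})$, which by the margin‑lemma estimate forces $h^*_{S,T,\alpha}$ (or a negligible perturbation of it) into $\hat{\mathcal H}^*_{\alpha,T}$; combined with $R_{\varphi,\mu_{0,S}}(h^*_{S,T,\alpha})\le\alpha\le\hat\alpha_S$ this gives $h^*_{S,T,\alpha}\in\hat{\mathcal H}'$, so $\min_{h\in\hat{\mathcal H}'}\widehat R_{\varphi,\mu_{1,S}}(h)\le R_{\varphi,\mu_{1,S}}(h^*_{S,T,\alpha})+\epsilon_{1,S}$, and therefore every $h\in\hat{\mathcal H}'_{1,S}$ has $\mathcal E_{1,S}(h\mid h^*_{S,T,\alpha})\le4\epsilon_{1,S}$. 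Finally $\hat h\in\hat{\mathcal H}'_{1,S}$: in the first branch by construction; to rule out the second branch in the informative case, use the convexity ingredient — the convex combination $\lambda\,\argmin_{h\in\hat{\mathcal H}'}\widehat R_{\varphi,\mu_{1,S}}(h)+(1-\lambda)\argmin_{h\in\hat{\mathcal H}'}\widehat R_{\varphi,\mu_{1,T}}(h)$ lies in $\hat{\mathcal H}'_{1,S}\cap\hat{\mathcal H}'_{1,T}$ for a suitable $\lambda$ whenever those two minimizers have comparable source and target empirical risks, which is exactly what the informative case guarantees.

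The step I expect to be the real obstacle is the quantitative bookkeeping in and around the margin lemma: one must calibrate the interior point, the mixing weights, and the slacks so that the perturbed comparators simultaneously remain empirically feasible at level $\alpha$, fall inside the $6\epsilon_{1,T}$ band defining $\hat{\mathcal H}^*_{\alpha,T}$ (which needs the estimate on $\widehat R_{\varphi,\mu_{1,T}}(\hat h^*_{T,\alpha-\epsilon_{0,T}})$ to be two‑sided and tight), and keep all $\varphi$‑Type‑II deviations within a constant multiple of $\epsilon_{1,S}$ — with constants that stay universal under the normalization $\tilde C=8B_{\mathcal H}L+2C\sqrt{2\log(2/\delta)}$ and the hypotheses $n_{0,T}\ge n_{1,T}$, $\epsilon_{0,T}\le7\alpha/16$, $\mathcal H_T(\alpha/8)\neq\emptyset$. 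A second, related subtlety is controlling the source‑side constraint slack (elements of $\hat{\mathcal H}'$ may have $\varphi$‑Type‑I error on $\mu_{0,S}$ up to $\hat\alpha_S+2\epsilon_{0,S}$) and making rigorous the claim that an empty empirical intersection $\hat{\mathcal H}'_{1,S}\cap\hat{\mathcal H}'_{1,T}$ forces the right‑hand side of the transfer bound to be of order at least $\epsilon_{1,T}$.
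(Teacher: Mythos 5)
Most of your outline tracks the paper's own argument: the same $1-4\delta$ uniform-convergence event, the same analysis of $\hat{\alpha}_S$ (exhibiting an element of $T^*(\alpha)$ in $\hat{\mathcal{H}}_{S}(\cdot)\cap\hat{\mathcal{H}}^*_{\alpha,T}$ to get $\hat\alpha_S=\alpha$ resp.\ $\hat\alpha_S\le\alpha_S$), the Type-I bound read off from $\hat h\in\hat{\mathcal{H}}'$ plus $\phi_0^{S\to T}$, your ``margin lemma'' playing exactly the role of the paper's Lemma~\ref{Lemma_Type-II-error-control} (that is how the paper gets $\mathcal{E}_{1,T}(h)\le c\,\epsilon_{1,T}$ for all $h\in\hat{\mathcal{H}}^*_{\alpha,T}$, using $\epsilon_{0,T}\le\epsilon_{1,T}$ and $\epsilon_{0,T}\le 7\alpha/16$), and a case split that is the contrapositive of the paper's split on whether $h^*_{S,T,\alpha}\in\hat{\mathcal{H}}'$.

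The genuine gap is your treatment of the second branch of the procedure, when $\hat{\mathcal{H}}'_{1,S}\cap\hat{\mathcal{H}}'_{1,T}=\emptyset$. You propose to \emph{rule this branch out} in the informative case by taking a convex combination of $\argmin_{h\in\hat{\mathcal{H}}'}\hat R_{\varphi,\mu_{1,S}}(h)$ and $\argmin_{h\in\hat{\mathcal{H}}'}\hat R_{\varphi,\mu_{1,T}}(h)$, ``whenever those two minimizers have comparable source and target empirical risks, which is exactly what the informative case guarantees.'' It does not: the transfer modulus only carries performance from source to target, so nothing controls $\hat R_{\varphi,\mu_{1,S}}$ of the target minimizer, which can be larger by an order-one amount even when $R_{\varphi,\mu_{1,T}}(h^*_{S,T,\alpha})=R_{\varphi,\mu_{1,T}}(h^*_{T,\alpha})$ and $\phi_1^{S\to T}(4\epsilon_{1,S})$ is tiny (e.g.\ when the target Type-II risk is nearly flat on $\hat{\mathcal{H}}'$ but the source risk is not). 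Quantitatively, membership of the mixture in $\hat{\mathcal{H}}'_{1,S}$ forces the mixing weight to be within $O(\epsilon_{1,S})$ of $1$, while membership in $\hat{\mathcal{H}}'_{1,T}$ only allows weights bounded away from $1$ unless the source minimizer is already within $2\epsilon_{1,T}$ of the empirical target minimum, which the informativeness hypothesis does not give (your own estimate only yields a gap of order $(K+c)\epsilon_{1,T}$ with $K+c>2$). Consequently the empty-intersection branch cannot be excluded, and in that branch your only bound is $\mathcal{E}_{1,T}(\hat h)\lesssim\epsilon_{1,T}$, which does not imply the transfer-term bound, since $R_{\varphi,\mu_{1,T}}(h^*_{S,T,\alpha})-R_{\varphi,\mu_{1,T}}(h^*_{T,\alpha})+\phi_1^{S\to T}(4\epsilon_{1,S})$ can be much smaller than $\epsilon_{1,T}$ (it tends to $0$ when $\mu_{i,S}=\mu_{i,T}$ and $n_{1,S}\to\infty$). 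The paper closes exactly this hole with a comparison argument you are missing: take any $\hat h'_S\in\hat{\mathcal{H}}'_{1,S}$; emptiness of the intersection gives $\hat R_{\varphi,\mu_{1,T}}(\hat h'_S)>\hat R_{\varphi,\mu_{1,T}}(\hat h)+2\epsilon_{1,T}$, which (on the event) forces $R_{\varphi,\mu_{1,T}}(\hat h)<R_{\varphi,\mu_{1,T}}(\hat h'_S)$, and the bound $\mathcal{E}_{1,S}(\hat h'_S\mid h^*_{S,T,\alpha})\le4\epsilon_{1,S}$ plus $\phi_1^{S\to T}$ applied to $\hat h'_S$ then transfers to $\hat h$ even though $\hat h$ itself need not lie in $\hat{\mathcal{H}}'_{1,S}$. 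You need this (or an equivalent) step; as written, your second ``subtlety'' is in fact a false claim rather than bookkeeping.
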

Note that the assumption $n_{0,T} \geq n_{1,T}$ typically holds in practice, since in the Neyman--Pearson setting class~1 usually corresponds to the rare class, in contrast to class~0. Furthermore, without this assumption, only the bound for the $\varphi$-Type-II error would change, where the term $c \cdot \epsilon_{1,T}$ would be replaced by $c \cdot (\epsilon_{0,T} + \epsilon_{1,T})$.
\begin{remark}
In the special case where $\mu_{0,S}=\mu_{0,T}$, Theorem~\ref{theorem_bound} recovers the bounds in \cite{kalan2025transfer,kalantight}. In this setting, we have $\alpha_S \leq \alpha$ by \eqref{alpha_s_definition}, and $\phi^{S\to T}_{0}$ reduces to the identity function. Consequently, the bound for the $\varphi$-Type-I error simplifies to $\alpha + \epsilon_0$, where $\epsilon_0=\epsilon_{0,S}=\epsilon_{0,T}$. Regarding the $\varphi$-Type-II excess error bound, in this case we have $h^*_{S,T,\alpha}=h^*_{S,\alpha}$. Furthermore, in the case where source and target distributions are identical, i.e., $\mu_{i,S}=\mu_{i,T}$ for $i\in\{0,1\}$, as $n_{0,S}, n_{1,S}\to\infty$, the $\varphi$-Type-II excess error converges to $0$, while the upper bound on the $\varphi$-Type-I error converges to $\alpha$.
\end{remark}
\begin{remark}
Theorem~\ref{theorem_bound} guarantees the adaptivity of the proposed transfer learning procedure without requiring any prior knowledge of the relatedness between source and target. It recovers the performance of using only target data, thereby avoiding negative transfer when the source is uninformative. Moreover, whenever the source is related and informative, the procedure leverages it to achieve better performance compared to using only the target, in terms of both $\varphi$-Type-I and $\varphi$-Type-II errors.
\end{remark}
\begin{remark}
In Theorem \ref{theorem_bound}, the source effect is captured by the functions $\phi_{1}^{S \to T}$ and $\phi_{0}^{S \to T}$. In contrast, \cite{kalan2025transfer,kalantight} use the notion of transfer exponent \citep{hanneke2019value} to characterize the source effect. In Appendix \ref{appendix_transfer_exponent}, we show that Theorem \ref{theorem_bound} can also be expressed in terms of the transfer exponent.
\end{remark}
\begin{remark}
Figure \ref{fig:alpha} illustrates the effect of $\alpha_S$ on reducing the target Type-I error. In this example, $\mathcal{H}=\{\mathds{1}\{x\geq t\}: t\in \mathbb{R}\}$ and we consider one target and two sources that all share the distribution $\mu_1$, while $\mu_0$ differs across them. All distributions are Gaussian with the same variance but different means. For source-1, $\alpha_{S_1}=\alpha$, which implies that it does not shrink $\hat{\mathcal{H}}^*_{\alpha,T}$ since $\hat{\mathcal{H}}_{S_1}(\hat{\alpha}_S)\supset\hat{\mathcal{H}}^*_{\alpha,T}$, and thus source-$1$ cannot reduce the target Type-I error. In contrast, for source-2, $\alpha_{S_2}>\alpha$, and $\hat{\mathcal{H}}'=\hat{\mathcal{H}}_S(\hat{\alpha}_{S_2})\cap\hat{\mathcal{H}}^*_{\alpha,T}$ yields an improvement in the target Type-I error.
\end{remark}





\subsection{Computational Guarantee for the Transfer Learning Procedure}
In this subsection, we provide a computational guarantee for the transfer learning procedure introduced in Section~\ref{section-TL-Procedure} by reformulating it as an optimization procedure. The optimization procedure consists of two stages: an auxiliary algorithm for computing $\hat{\alpha}_S$ in \eqref{empirical-threshold}, and a main algorithm for solving the final model $\hat{h}$ in \eqref{procedure_intersection}. For this optimization procedure, we consider a parameterized hypothesis class $\mathcal{H} \doteq \{h_{\theta}: \theta\in \mathbb{R}^d, ||\theta||\leq B\}$. 

\paragraph{Notations.} $\norm{\cdot}$ denotes the Euclidean ($\ell_2$) norm. In this subsection,  $\hat{R}_{\varphi,\mu}(\theta)$ is a short hand for $\hat{R}_{\varphi,\mu}(h_\theta)$. The notation $f(\theta;z)$ denotes the evaluation of the function $f$ at $z$, parameterized by $\theta$.

We make the following assumption on the loss function $\varphi$ and the hypothesis class.
 \begin{assumption}[Convexity, Bounded and Lipschitz Gradient]\label{ass:opt}
We assume that $\varphi\!\left((2y-1) h_{\theta}(x)\right)$ is convex in $\theta$ for any $x \in \mathcal{X}$ and $y \in \{0,1\}$. In addition, the gradient is bounded, i.e.,$
\|\nabla_{\theta} \varphi((2y-1) h_{\theta}(x))\| \leq G$, for all $x \in \mathcal{X}$ and $\theta$ with $\|\theta\|\leq B$, and the gradient is Lipschitz continuous, i.e., 
\[
\|\nabla_{\theta} \varphi((2y-1) h_{\theta}(x)) - \nabla_{\theta} \varphi((2y-1) h_{\theta'}(x))\| \leq H\|\theta - \theta'\|,
\] for all $x \in \mathcal{X}$, $y \in \{0,1\}$, and $\theta, \theta'$ with $\|\theta\|,\|\theta'\|\leq B$.
\end{assumption}
The above assumption is standard and holds for common loss functions and hypothesis classes, such as the logistic loss, linear classifiers, and majority votes over a set of basis functions. 

\textbf{First Stage: Computing $\bm{\hat{\alpha}}_S$.} 
Define 
$g_{0,T}(\theta) \doteq \hat{R}_{\varphi,\mu_{0,T}}(\theta) - \alpha - \epsilon_{0,T}$, 
$g_{0,S}(\theta,\alpha') \doteq \hat{R}_{\varphi,\mu_{0,S}}(\theta) - \alpha' - \epsilon_{0,S}$, 
and 
$g_{1,T}(\theta) \doteq \hat{R}_{\varphi,\mu_{1,T}}(\theta) - \hat{R}_{\varphi,\mu_{1,T}}(\hat{\theta}_{T,\alpha-\epsilon_{0,T}}^*) - \epsilon_{1,T}$. By definition, $\hat{\alpha}_S$ is the smallest $\alpha' \geq \alpha$ such that the intersection of the sets $\cbr{\theta: g_{0,T}(\theta) \leq 0}$, $\cbr{\theta:g_{0,S}(\theta,\alpha')\leq 0}$ and $\cbr{\theta: g_{1,T}(\theta) \leq 0}$ is non-empty. This can be formulated as the following  optimization problem: 
\begin{align} 
\min_{\alpha',\theta} \ \alpha' 
\quad \text{s.t.} \quad 
\alpha' \geq \alpha,\; g'(\theta,\alpha')\leq 0. \label{eq:objective alpha'}
\end{align}
where $g'(\theta,\alpha')\doteq \max\cbr{ g_{0,T}(\theta), g_{0,S}(\theta,\alpha'), g_{1,T}(\theta) }$.
  Problem \eqref{eq:objective alpha'} is equivalent to the following minimax problem, in the sense that the primal solution of \eqref{eq:Lagrangian1} coincides with the solution of \eqref{eq:objective alpha'}~\cite[Section~5.2.3]{boyd2004convex}:
\begin{align}
    \min_{\alpha'\geq \alpha,\theta} \max_{\lambda \geq 0} \alpha' + \lambda g'(\theta,\alpha').  \label{eq:Lagrangian1}
\end{align}
\vspace{-4mm}

\begin{algorithm2e}
\SetKwInOut{Input}{Input}
\SetKwInOut{Output}{Output}
\SetKwFunction{CPSolver}{CP\textnormal{-}Solver}
\SetKw{KwAnd}{and}
\SetInd{1em}{1em} 
\DontPrintSemicolon  
     \caption{ $\protect\CPSolver(f,  g, \xi, \epsilon)$} 
    	\label{algorithm:CP}
    \textbf{Input:}  objective  $f(\theta)\doteq  \frac{1}{|\mathcal{Z}_f|}\sum_{\zeta\in \mathcal{Z}_f}f(\theta;\zeta)$, constraint $ g(\theta)\doteq \frac{1}{| \mathcal{Z}_g|}\sum_{\zeta\in \mathcal{Z}_g} g(\theta;\zeta)$,  slackness $\delta$, error tolerance $\epsilon$, failure probability $\delta$.\\
    \textbf{Initialize:} $(\theta_0,\lambda_0) = (0,0)$, $\eta = \frac{c_{\eta}}{\sqrt{N(\epsilon)}}$,   $G  = \sup_{\theta,\zeta_f,\zeta_g}\max\{\norm{\nabla f(\theta;\zeta_f)},\norm{\nabla  g(\theta;\zeta_g)} \}$, $\gamma =  G^2 \eta$.
    
        	\For{$t = 0,...,N(\epsilon)-1$}
{
    Sample $\zeta_{f,t}$, $\zeta_{g,t}$ uniformly from $\mathcal{Z}_f$ and $\mathcal{Z}_g$ respectively and uniformly\\
    $\theta_{t+1} =  \theta_t - \eta ( \nabla f(  \theta_t;\zeta_{f,t} ) ) +  \lambda_t  \nabla g(\theta_t;\zeta_{g,t} ) )$\\
    $\theta_{t+1} = \theta_{t+1}/\max\cbr{1,\norm{\theta_{t+1}}/B}$\\
    $\lambda_{t+1} = [(1-\gamma \eta )\lambda_t + \eta    g(\theta_t; \zeta_{g,t}) ]_+$

}
\textbf{Output:} $\hat\theta =$ projection of $\frac{1}{N}\sum_{t=1}^N \theta_t$ onto the set $\cbr{\theta:  g(\theta)\leq -\xi}$
\end{algorithm2e}

To solve \eqref{eq:Lagrangian1}, we need an approximation of $\hat{\theta}_{T,\alpha-\epsilon_{0,T}}^*$. This can be obtained by solving another  optimization problem: $\min_{\theta} \hat{R}_{\varphi,\mu_{1,T}}(\theta) \ \text{s.t.}\ g_{0,T}^-(\theta) \doteq \hat{R}_{\varphi,\mu_{0,T}}(\theta) - \alpha+ \epsilon_{0,T} \leq 0$, which is equivalent to  $\min_{\theta}\max_{\lambda\geq 0} \hat{R}_{\varphi,\mu_{1,T}}(\theta) +\lambda g_{0,T}^-(\theta)  $. Following~\citep{mahdavi2012stochastic}, we solve this problem using a Stochastic Gradient Descent–Ascent (SGDA) method with a projection step, as described in Algorithm~\ref{algorithm:CP} $\CPSolver$.  This algorithm solves convex programs which takes objective and constraint (possibly inexact) and returns an $\epsilon$ accurate solution. Since we only have access to inexact constraint $g$, the final projection step is onto $\cbr{\theta: g(\theta) \leq -\xi}$ to allow some slackness.

Once an approximation $\hat{\theta}_{T,\alpha-\epsilon_{0,T}}$ is obtained, we return to \eqref{eq:Lagrangian1}, replacing $g_{1,T}(\theta)$ with $\hat g_{1,T}(\theta )\doteq \hat{R}_{\varphi,\mu_{1,T}}(\theta) - \hat{R}_{\varphi,\mu_{1,T}}(\hat{\theta}_{T,\alpha-\epsilon_{0,T}}) - 6\epsilon_{1,T}$, and again call Algorithm~\ref{algorithm:CP} to compute an approximation of $\hat{\alpha}_S$. The following key quantity will appear in the analysis of Algorithm~\ref{algorithm:CP}.

\begin{definition} \label{def:constraint lower bound} We  define $r(g(\theta))  \doteq  \inf  \cbr{ \| 
 v\|:  v \in \partial g(\theta), g(\theta) = 0, \theta \in \operatorname{dom}(g)}$.
\end{definition}

The above notion, introduced by~\cite{mahdavi2012stochastic}, measures the magnitude of the constraint gradient along the boundary of the constraint set. It is used in the convergence analysis to control the distance between the returned solution and the solution prior to the final projection step.
\vspace{-2mm}

\begin{algorithm2e}[t]
\small                 
\SetKwInOut{Input}{Input}
\SetKwInOut{Output}{Output}
\SetKwFunction{CPSolver}{CP\textnormal{-}Solver}
\SetKw{KwAnd}{and}
\SetInd{1em}{1em} 
\DontPrintSemicolon  
\caption{NP-Transfer-Learning}
\label{algorithm: hat theta}


\BlankLine
\textbf{Initialize:}
$G_\lambda \gets 2C+\max\!\left\{\epsilon_{0,S},\epsilon_{0,T},\epsilon_{1,S},\epsilon_{1,T}\right\}$\;
$r_T \gets r\!\left(g_{0,T}^-(\theta)\right)$, $r_{\hat{\alpha}_S} \gets r\!\left(g'(\theta,\alpha')\right)$, $r_S' \gets r\!\left(g'_S(\theta)\right)$,$r_T' \gets r\!\left(g'_T(\theta)\right)$, $r_{S,T} \gets r\!\left(g_{S,T}(\theta)\right)$

\BlankLine
\textbf{Define the slackness function}: $
\xi(\epsilon, r)\;\doteq\;
\min\!\left\{
  \frac{1}{4H},
  \left(G + G_\lambda\sqrt{\log\frac{1}{\delta}}\right)^{-1}
  \cdot \frac{\epsilon}{2+2H\epsilon}
\right\}\, r.$

\BlankLine
\textbf{Tolerance setup:}\;
$\epsilon_{S,T} \gets \epsilon_{1,S}$,\quad
$\epsilon'_T \gets \min\!\left\{ \xi(\epsilon_{S,T}, r_{S,T}),\ \epsilon_{1,T} \right\}$,\quad
$\epsilon'_S \gets \epsilon_{1,S}$\; 

$\epsilon_{\hat\alpha_S}
  \gets \min\!\left\{
    \xi(\epsilon_{S,T}, r_{S,T}),
    \xi(\epsilon'_S, r_S'),           
    \xi(\epsilon'_T, r_T')           
  \right\}$\;

$\epsilon_{T,\alpha-\epsilon_{0,T}}
  \gets \min\!\left\{
    \xi(\epsilon_{S,T}, r_{S,T}),
    \xi(\epsilon'_S, r_S'),
    \xi(\epsilon'_T, r_T'),
    \xi(\epsilon_{\hat{\alpha}_S}, r_{\hat{\alpha}_S})
  \right\}$\;

\BlankLine
\textbf{Warm-start on $T$:}\;
$\hat{\theta}_{T,\alpha-\epsilon_{0,T}}
   \gets \CPSolver\!\left(
      \hat R_{\varphi,\mu_{1,T}},\ g_{0,T}^-,\ 0,\ \epsilon_{T,\alpha-\epsilon_{0,T}}
   \right)$\;

$\hat g_{1,T}(\theta)
 \gets \hat R_{\varphi,\mu_{1,T}}(\theta)
   - \hat R_{\varphi,\mu_{1,T}}\! (\hat{\theta}_{T,\alpha-\epsilon_{0,T}} )
   - 6\epsilon_{1,T}$,  $\hat g'(\theta,\alpha)
 \gets \max\!\left\{
      g_{0,T}(\theta),\
      g_{0,S}(\theta,\alpha),\
      \hat g_{1,T}(\theta)
 \right\}$\;

\BlankLine
\textbf{Compute $\hat\alpha$:} 
$(\hat{\alpha},\perp) \gets
\CPSolver\!\left(
  \alpha',\ \hat g'(\theta,\alpha'),\ \xi(\epsilon_{\hat\alpha_S}, r_{\hat\alpha_S}),\ \epsilon_{\hat\alpha_S}
\right)$\;

\BlankLine
\textbf{Define the joint constraint}:\
$\hat g_{\hat{\alpha}}(\theta)
 \gets \max\!\left\{
   g_{0,T}(\theta),\ g_{0,S}(\theta,\hat{\alpha}),\ \hat g_{1,T}(\theta)
 \right\}$\;

\BlankLine
\textbf{Solve $T$ and $S$ subproblems:}\;
$\hat{\theta}'_{T}
 \gets \CPSolver\!\left(
   \hat R_{\varphi,\mu_{1,T}},\ \hat g_{\hat{\alpha}},\ \xi(\epsilon'_T, r_T'),\ \epsilon'_T
 \right)$ ; $\hat{\theta}'_{S}
 \gets \CPSolver\!\left(
   \hat R_{\varphi,\mu_{1,S}},\ \hat g_{\hat{\alpha}},\ \xi(\epsilon'_S, r_S'),\ \epsilon'_S
 \right)$\;

\BlankLine
\textbf{Define the final constraint:}\;
$\hat g'_T(\theta)
 \gets \hat R_{\varphi,\mu_{1,T}}(\theta)
   - \hat R_{\varphi,\mu_{1,T}}(\hat{\theta}'_{T})
   - 2\epsilon_{1,T}$ ; $\hat g_{S,T}(\theta)
 \gets \max\!\left\{
   \hat g_{\hat{\alpha}}(\theta),\ \hat g'_T(\theta)
 \right\}$\;

\BlankLine
\textbf{Final solve $\hat{h}$:}\;
$\tilde{\theta}
 \gets \CPSolver\!\left(
   \hat R_{\varphi,\mu_{1,S}},\ \hat g_{S,T},\ \xi(\epsilon_{S,T}, r_{S,T}),\ \epsilon_{S,T}
 \right)$\;

\BlankLine
\textbf{Output:} 
\textbf{If }{$\hat R_{\varphi,\mu_{1,S}}(\tilde\theta) - \hat R_{\varphi,\mu_{1,S}}(\hat{\theta}'_S) > 2\epsilon_{1,S}$: } $\hat\theta \gets \hat\theta'_T$ \textbf{else: }{
  $\hat\theta \gets \tilde\theta$
}

\end{algorithm2e}
\paragraph{Second Stage: Solving the Final Model $\hat h$.}
We reduce the final step of the procedure in Section~\ref{section-TL-Procedure}—which checks whether $\hat{\mathcal{H}}'_{1,S}$ and $\hat{\mathcal{H}}'_{1,T}$ intersect, as defined in \eqref{procedure_intersection}—to a constrained optimization problem.
We define 
$g_{\hat{\alpha}_S}(\theta) \doteq \max\cbr{g_{0,T}(\theta), g_{0,S}(\theta,\hat{\alpha}_S), g_{1,T}(\theta) }.$ 
The set $\{\theta: g_{\hat{\alpha}_S} (\theta)\leq 0\}$ coincides with the hypothesis class $\hat{\mathcal{H}}'$ defined in Section~\ref{section-TL-Procedure}.
We also define 
\begin{align}
    g'_D (\theta) \doteq \hat{R}_{\varphi,\mu_{1,D}}(\theta) - \hat{R}_{\varphi,\mu_{1,D}}(\hat{\theta}_{D,\hat\alpha_S}^*) - 2\epsilon_{1,D} \label{eq:gD prime}
\end{align}
where $\hat{\theta}_{D,\hat\alpha_S}^* = \arg\min_{\theta}\hat{R}_{\varphi,\mu_{1,D}}( \theta) \ \text{s.t.} \ g_{\hat\alpha_S}(\theta) \leq 0$ for $D\in\cbr{S,T}$. The set $\{\theta:  g_{\hat{\alpha}_S}(\theta)\leq 0, g'_{D}(\theta)\leq 0 \}$ coincides with the hypothesis class $\hat{\mathcal{H}}'_{1,D}$ as in~\eqref{procedure_intersection}. Then, the second stage of the algorithm is to solve the following problem:
\begin{align}
    \min_{\theta} \ \hat{R}_{\varphi,\mu_{1,S}}(\theta) \quad \text{s.t.} \quad g_{S,T}(\theta) \leq 0 \label{eq:second stage}
\end{align}
where $g_{S,T}(\theta) \doteq \max\cbr{g_{\hat{\alpha}_S}(\theta) ,g_{T}'(\theta)  }$. To approximate $\hat{\theta}_{D,\hat\alpha_S}^*$ in~\eqref{eq:gD prime} for $D\in \cbr{S,T}$, we again call Algorithm~\ref{algorithm:CP} to solve $ \min_{ \theta:g_{\hat{\alpha}_S}(\theta) \leq 0}  \ \hat{R}_{\varphi, \mu_{1,D}}(\theta)$
to a certain accuracy prescribed in Algorithm~\ref{algorithm: hat theta}. The whole algorithm (first stage + second stage) is presented in Algorithm~\ref{algorithm: hat theta}.

\begin{theorem}\label{thm:opt rate}
Let Assumption~\ref{ass:opt} hold. With properly chosen $c_{\eta}$ and $N(\epsilon)$ in Algorithm~\ref{algorithm:CP}, then the
Algorithm~\ref{algorithm: hat theta} outputs $\hat \theta$ such that, with probability at least $1-5\delta$, the hypothesis $h_{\hat{\theta}}$ satisfies the statistical guarantee in Theorem~\ref{theorem_bound}, and the number of stochastic gradient evaluations is bounded by 
\begin{align*}
   c\cdot \pare{   G  +    G_{\lambda}  \sqrt{ \log\frac{1}{\delta}}  }^3\pare{  \frac{   G  G_{\lambda}   \sqrt{ \log\frac{1}{\delta}} +     \frac{   G}{c_\eta}  }{ r_{\min}^2 }      +  \frac{B^2+\frac{G^2}{r_{\min}^2} }{c_{\eta}}  +\frac{G}{r_{\min}} G_{\lambda}   \sqrt{ \log\frac{1}{\delta}}  }^2 \cdot \frac{1}{\epsilon_{\min}^2}
\end{align*}
for some  $c>0$,
where $\epsilon_{\min} \doteq  \min\cbr{\epsilon_{1,S}, \epsilon_{1,T},  \xi(\epsilon_{S,T}, r_{S,T}), \xi(\epsilon'_{S}, r'_{S}), \xi(\epsilon'_{T}, r'_{T}), \xi(\epsilon_{\hat{\alpha}_S},r_{\hat{\alpha}_S} )},$ and $r_{\min}\doteq \min\cbr{r_T,  r_{\hat\alpha_S}, r'_{T}, r'_{S}, r_{S,T} }$ where $\xi$ is defined in Algorithm~\ref{algorithm: hat theta}.
\end{theorem}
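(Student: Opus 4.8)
\textbf{Proof proposal for Theorem~\ref{thm:opt rate}.}

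The plan is to analyze Algorithm~\ref{algorithm: hat theta} as a chain of calls to $\CPSolver$ (Algorithm~\ref{algorithm:CP}), establishing two things: (i) each call returns a point that is \emph{feasible} for the corresponding population-level constraint set appearing in the procedure of Section~\ref{section-TL-Procedure}, up to the slackness budget encoded in $\xi(\cdot,\cdot)$, and is $\epsilon$-optimal for its objective; and (ii) the total gradient count is the claimed quantity. First I would isolate the single-call guarantee: following~\citep{mahdavi2012stochastic}, SGDA on the Lagrangian $\min_\theta\max_{\lambda\ge 0} f(\theta)+\lambda g(\theta)$ with step size $\eta = c_\eta/\sqrt{N(\epsilon)}$, bounded stochastic gradients $G$, bounded multiplier (controlled by $G_\lambda$ and $\log\tfrac1\delta$ via a high-probability bound on $\lambda_t$), after $N(\epsilon)$ iterations produces an averaged iterate whose objective suboptimality and constraint violation are both $O(1/\sqrt{N(\epsilon)})$; then the final projection onto $\{\theta: g(\theta)\le -\xi\}$ converts an $\epsilon'$-violating point into a truly feasible point at the cost of moving it by $O(\epsilon'/r(g))$ in parameter space, which via $H$-Lipschitz gradients (Assumption~\ref{ass:opt}) perturbs all downstream objective/constraint values by $O(H\cdot\epsilon'/r(g))$. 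Inverting, to guarantee accuracy $\epsilon$ downstream one needs $N(\epsilon) = \tilde O\big((G+G_\lambda\sqrt{\log\tfrac1\delta})^2 (B^2 + G^2/r_{\min}^2)/(c_\eta\,\epsilon^2)\big)$ or so — matching the inner cube/square structure of the stated bound once one tracks how the slackness $\xi$ (which itself carries a factor $r/(G+G_\lambda\sqrt{\log\tfrac1\delta})$) feeds into $\epsilon_{\min}$.

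Second I would set up the \textbf{error-propagation bookkeeping} that justifies the nested tolerance definitions in the algorithm. The procedure in Section~\ref{section-TL-Procedure} is a DAG of set definitions: $\hat\theta^*_{T,\alpha-\epsilon_{0,T}}$ feeds $\hat g_{1,T}$, which feeds the $\hat\alpha_S$ computation, which feeds $\hat g_{\hat\alpha}$, which feeds $\hat\theta'_S,\hat\theta'_T$, which feed $\hat g_{S,T}$, which feeds the final $\tilde\theta$. Each arrow introduces an approximation error, and the key structural fact is that an $\epsilon$-error in an \emph{objective value} used to define a downstream constraint level translates into an $\epsilon$-error in that constraint, and a feasibility slackness $\xi$ at one node must dominate the accumulated perturbation of every constraint it participates in. I would verify, node by node, that the choices $\epsilon_{S,T}\gets\epsilon_{1,S}$, $\epsilon'_T\gets\min\{\xi(\epsilon_{S,T},r_{S,T}),\epsilon_{1,T}\}$, $\epsilon_{\hat\alpha_S}$ and $\epsilon_{T,\alpha-\epsilon_{0,T}}$ as the minima of the relevant $\xi$-terms, are exactly what is needed so that: (a) $\hat\alpha$ satisfies $\hat\alpha \le \hat\alpha_S + O(\epsilon_{0,S})$ and $\{\theta:\hat g_{\hat\alpha}(\theta)\le 0\}$ is sandwiched between $\hat{\mathcal{H}}'$ and a slightly enlarged version with the $6\epsilon_{1,T}$ and $\hat\alpha_S$ slacks still valid; (b) $\hat\theta'_D$ lies in (an $O(\epsilon_{1,D})$-enlargement of) $\hat{\mathcal{H}}'_{1,D}$; (c) the final branch test $\hat R_{\varphi,\mu_{1,S}}(\tilde\theta) - \hat R_{\varphi,\mu_{1,S}}(\hat\theta'_S) > 2\epsilon_{1,S}$ correctly detects whether the intersection $\hat{\mathcal{H}}'_{1,S}\cap\hat{\mathcal{H}}'_{1,T}$ is empty, up to the doubling of the $\epsilon_{1,S}$ slack that the proof of Theorem~\ref{theorem_bound} already tolerates. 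The upshot is that $h_{\hat\theta}$ lands in a set for which Theorem~\ref{theorem_bound}'s statistical conclusion was proved — i.e.\ the generalization bound is inherited verbatim. A union bound over the (constantly many) $\CPSolver$ calls, each failing with probability $\le\delta$, plus the $4\delta$ from Theorem~\ref{theorem_bound}, gives the $1-5\delta$.

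Third, for the \textbf{gradient-complexity count}, I would simply observe that Algorithm~\ref{algorithm: hat theta} makes a fixed number (six) of $\CPSolver$ calls, each costing $N(\epsilon)$ gradient evaluations for its own tolerance $\epsilon$; summing and bounding every tolerance below by $\epsilon_{\min}$ and every curvature radius below by $r_{\min}$ collapses the total to $c\cdot N(\epsilon_{\min})$ with $N(\epsilon_{\min})$ equal to the displayed expression. The algebra of substituting $\xi(\epsilon,r) = \Theta\big(\min\{1/H, (G+G_\lambda\sqrt{\log\tfrac1\delta})^{-1}\epsilon/(1+H\epsilon)\}\,r\big)$ into the $N(\cdot)$ formula — which is where the cube $(G+G_\lambda\sqrt{\log\tfrac1\delta})^3$ and the $1/r_{\min}^2$, $B^2$, $1/c_\eta$ terms arise — is routine but tedious.

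The main obstacle I expect is \textbf{step (c) of the second paragraph}: showing that the inexact branch test faithfully reproduces the emptiness check $\hat{\mathcal{H}}'_{1,S}\cap\hat{\mathcal{H}}'_{1,T}\ne\emptyset$ that drives the case split in~\eqref{procedure_intersection}. Because $\hat\theta'_S$ and $\hat\theta'_T$ are only approximate minimizers over an only-approximate feasible set $\hat{\mathcal{H}}'$, both the thresholds defining $\hat{\mathcal{H}}'_{1,S},\hat{\mathcal{H}}'_{1,T}$ and the candidate $\tilde\theta$ are perturbed, and one must show these perturbations never flip the outcome of the test in a way that the statistical proof of Theorem~\ref{theorem_bound} cannot absorb — this is precisely why the test uses the loosened $2\epsilon_{1,S}$ margin and why $\hat g'_D$ carries a $2\epsilon_{1,D}$ (rather than $\epsilon_{1,D}$) slack, and making that matching airtight, while simultaneously keeping $\hat g_{\hat\alpha}$'s $6\epsilon_{1,T}$ budget from Theorem~\ref{theorem_bound} intact, is the delicate part of the argument.
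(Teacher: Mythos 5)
Your proposal follows essentially the same route as the paper's proof: a single-call guarantee for \texttt{CP-Solver} (the paper's Theorem~\ref{thm:cpsolver rate}, including the projection/slackness argument via $r(g)$ and the $H$-smoothness), a node-by-node propagation of constraint-approximation errors through the chain of calls to justify the nested tolerance choices, an analysis of the final branch test showing it reproduces the emptiness check in~\eqref{procedure_intersection} up to inflated/shrunk versions of $\hat{\mathcal{H}}'_{1,S}$ that the proof of Theorem~\ref{theorem_bound} absorbs, and a sum of the per-call gradient counts bounded via $\epsilon_{\min}$ and $r_{\min}$. The only cosmetic discrepancy is that Algorithm~\ref{algorithm: hat theta} makes five (not six) \texttt{CP-Solver} calls, which does not affect the argument.
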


\begin{remark}[Computational Complexity]
    The proposed algorithm outputs a model that satisfies the learning guarantee as in Theorem~\ref{theorem_bound}, and the total number of (stochastic) gradient evaluations scales as $\min\{\tfrac{1}{\epsilon_{1,S}}, \tfrac{1}{\epsilon_{1,T}}\}$. Roughly speaking, we call $\CPSolver$ procedures multiple times to solve different convex programs to different accuracies, and the final gradient complexity is dominated by the worst one of them.  
\end{remark}
\vspace{-3mm}
\section{Experiments}
\vspace{-2mm}
In this section, we implement the proposed algorithm on two climate datasets: \citet{yu2023climsim}, with 124 features, and \citet{nasa_power_api}, with 6 features, for heavy rain detection, where source and target correspond to different locations. We use a multilayer perceptron with two hidden layers and ReLU activation. In these experiments, the number of target samples is fixed at $n_{0,T} = n_{1,T} = 40$, while the number of source samples $n_{0,S} = n_{1,S}$ varies from 50 to 950, with $\alpha=0.1$. Each case is run for 10 trials, and the trained model is evaluated on a target test set of size $n_{0,T} = n_{1,T} = 1700$.

In Figure \ref{fig1},  the proposed Transfer Learning Algorithm (TLA) keeps the Type-I error close to the threshold, whereas the Only Target method exceeds it more and also performs poorly on the Type-II error. In this figure, the source is informative, and TLA leverages it to outperform Only Target method without requiring prior knowledge of the relatedness between source and target.

In Figure \ref{fig2}, the source is not informative, and the Only Source method suffers from a high Type-II error. TLA, however, adaptively avoids negative transfer and achieves performance comparable to the Only Target method, corroborating the result in Theorem \ref{theorem_bound}. Hence, when the source is informative, our algorithm adaptively exploits it, and when it is not, it avoids negative transfer, even in the presence of potential shifts in both class-$0$ and class-$1$ distributions.


\begin{figure}[htbp]
\centering
\begin{tikzpicture}
  \begin{groupplot}[
    group style={group size=2 by 1, horizontal sep=8em},
    myaxis
  ]

  \nextgroupplot[
    xlabel={$n_{0,S}=n_{1,S}$},
    ylabel={Target Type-II error},
    xmin=50, xmax=950,
    xtick={50,250,450,650,850},      
    minor xtick={150,350,550,750,950}, 
    ytick={0.02,0.05,0.1,.15,0.2,0.25,0.3}
  ]
    \addplot+[blue, thick, mark=*] coordinates {
      (50,0.0796) (150,0.0593) (250,0.0607) (350,0.0344) (450,0.0356)
      (550,0.0365) (650,0.0439) (750,0.0311) (850,0.0237) (950,0.0242)};
    \addplot+[red, thick, mark=x] coordinates {
      (50,0.1424) (150,0.2225) (250,0.1825) (350,0.1181) (450,0.1458)
      (550,0.2409) (650,0.0939) (750,0.1576) (850,0.1215) (950,0.1323)};
    \addplot+[black, thick, mark=triangle*] coordinates {
      (50,0.1084) (150,0.1748) (250,0.0724) (350,0.1547) (450,0.0833)
      (550,0.0822) (650,0.0473) (750,0.0541) (850,0.0455) (950,0.0402)};
    \legend{TLA,Only Target,Only Source}

  \nextgroupplot[
    xlabel={$n_{0,S}=n_{1,S}$},
    ylabel={Target Type-I error},
    xmin=50, xmax=950,
    xtick={50,250,450,650,850},
    minor xtick={150,350,550,750,950},
    ymin=0.08, ymax=0.18,                
  ytick={0.08,0.10,0.12,0.14,0.17},
  scaled y ticks=false,
  yticklabel style={
    /pgf/number format/.cd,
    fixed, precision=2, zerofill
  }
  ]
    \addplot+[blue, thick, mark=*] coordinates {
      (50,0.1172) (150,0.1112) (250,0.1157) (350,0.1192) (450,0.1181)
      (550,0.1114) (650,0.1086) (750,0.1179) (850,0.1142) (950,0.1119)};
    \addplot+[red, thick, mark=x] coordinates {
      (50,0.0982) (150,0.1042) (250,0.1491) (350,0.1110) (450,0.1036)
      (550,0.1545) (650,0.1168) (750,0.1124) (850,0.1328) (950,0.0991)};
    \addplot+[black, thick, mark=triangle*] coordinates {
      (50,0.1286) (150,0.0930) (250,0.0998) (350,0.0893) (450,0.0906)
      (550,0.0893) (650,0.0970) (750,0.0912) (850,0.0932) (950,0.0934)};
   \legend{TLA,Only Target,Only Source}

  \end{groupplot}
\end{tikzpicture}

\caption{\footnotesize The performance of our algorithm (TLA), along with two baselines---using only source data and only target data---on Climate data~\citep{yu2023climsim}, is evaluated under a Type-I error threshold of $\alpha = 0.1$. In this experiment, we fix $n_{0,T} = n_{1,T} = 40$ and vary the number of source samples $n_{0,S} = n_{1,S}$.
}\label{fig1}
\end{figure}
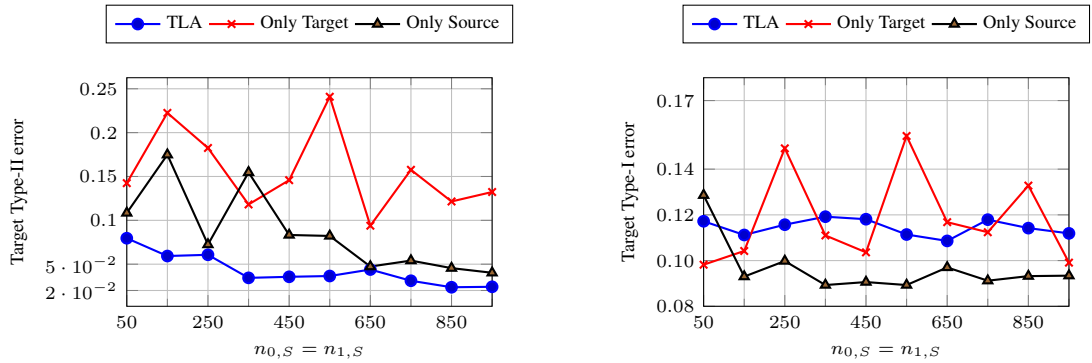

\begin{figure}[htbp]
\centering
\begin{tikzpicture}
  \begin{groupplot}[
    group style={group size=2 by 1, horizontal sep=8em},
    myaxis
  ]

  \nextgroupplot[
    xlabel={$n_{0,S}=n_{1,S}$},
    ylabel={Target Type-II error},
    xmin=50, xmax=950,
    xtick={50,250,450,650,850},      
    minor xtick={150,350,550,750,950}, 
    ytick={0.2,0.3,0.5,0.7,0.9}
  ]
    \addplot+[blue, thick, mark=*] coordinates {
      (100,0.1726470582) (200,0.2145882368) (300,0.2041176483) (400,0.2094117627)
      (500,0.2348823532) (600,0.2195882365) (700,0.2949411780) (800,0.2454705864)
      (900,0.2600588217) (1000,0.2701176465)
    };
    \addplot+[red, thick, mark=x] coordinates {
      (100,0.1923529424) (200,0.1782352969) (300,0.1867058821) (400,0.2480588235)
      (500,0.2000588238) (600,0.2212352961) (700,0.2614705905) (800,0.2521176457)
      (900,0.1719411764) (1000,0.1930000007)
    };
    \addplot+[black, thick, mark=triangle*] coordinates {
      (100,0.7145882338) (200,0.8737058878) (300,0.7778235316) (400,0.8012352914)
      (500,0.7665882409) (600,0.7140588254) (700,0.7813529313) (800,0.7477058858)
      (900,0.8648235261) (1000,0.8180588305)
    };
    \legend{TLA,Only Target,Only Source}

  \nextgroupplot[
    xlabel={$n_{0,S}=n_{1,S}$},
    ylabel={Target Type-I error},
    xmin=50, xmax=950,
    xtick={50,250,450,650,850},      
    minor xtick={150,350,550,750,950}, 
    ytick={0.02,0.05,0.1,0.15,0.2}
  ]
    \addplot+[blue, thick, mark=*] coordinates {
      (100,0.1885294124) (200,0.1791764714) (300,0.1820000000) (400,0.2030588269)
      (500,0.1734117642) (600,0.1762352943) (700,0.1592352943) (800,0.1657647073)
      (900,0.1622352973) (1000,0.1587058805)
    };
    \addplot+[red, thick, mark=x] coordinates {
      (100,0.1850000000) (200,0.2217647091) (300,0.1635882340) (400,0.1641176485)
      (500,0.1979411796) (600,0.1575294130) (700,0.1672352973) (800,0.1657647055)
      (900,0.2105882332) (1000,0.2085294098)
    };
    \addplot+[black, thick, mark=triangle*] coordinates {
      (100,0.1323529424) (200,0.0485882354) (300,0.0367647063) (400,0.0512941181)
      (500,0.0574705880) (600,0.0627058833) (700,0.0252352939) (800,0.0697058824)
      (900,0.0114117647) (1000,0.0138235293)
    };
    \legend{TLA,Only Target,Only Source}

  \end{groupplot}
\end{tikzpicture}
\caption{The performance of our algorithm (TLA), along with two baselines---using only source data and only target data---on Climate data~\citep{nasa_power_api}, is evaluated under a Type-I error threshold of $\alpha = 0.1$. In this experiment, we fix $n_{0,T} = n_{1,T} = 40$ and vary the number of source samples $n_{0,S} = n_{1,S}$.}\label{fig2}
\end{figure}




\clearpage
\bibliographystyle{plainnat}
\bibliography{reference}

\newpage
\appendix
\section{Expressing Theorem \ref{theorem_bound} in Terms of Transfer Exponent}\label{appendix_transfer_exponent}
Theorem \ref{theorem_bound} uses the functions $\phi_{1}^{S \to T}$ and $\phi_{0}^{S \to T}$ to capture the source effect in the generaliztion bound. This formulation is broad enough to yield bounds expressed through the notion of transfer exponent \citep{hanneke2019value,kalantight}.

Regarding the $\varphi$-Type-II excess error, assume that there exist constants $C_1 > 0$ and $\rho_1 > 0$ such that for all $h \in \mathcal{H}$,
\[
\Bigl[\mathcal{E}_{1,S}(h \mid h^*_{S,T,\alpha})\Bigr]_{+} \;\geq\; C_1 \, \Bigl[\mathcal{E}_{1,T}(h \mid h^*_{S,T,\alpha})\Bigr]_{+}^{\rho_1}.
\]
Then, by Definition \ref{def_Transfer_Modulus}, for any $\varepsilon \geq 0$ we have $\phi_{1}^{S \to T}(\varepsilon) \;\leq\; C_1^{1/\rho_1}\, \varepsilon^{1/\rho_1}$, and consequently the term $\phi_{1}^{S \to T}(c \cdot \epsilon_{1,S})$ in Theorem \ref{theorem_bound} can be replaced with $c \cdot \epsilon_{1,S}^{1/\rho_1}$ for some numerical constant $c$. Regarding the $\varphi$-Type-I error, first we define the minimal source level that guarantees $\alpha$ level on target as
\[
\alpha^{*}_{S \to T} \;:=\; \inf \bigl\{ \alpha' \in [0,1] : \; \phi_{0}^{S \to T}(\alpha') \leq \alpha \bigr\}.
\]
Assume that there exist constants $C_0>0$ and $\rho_0>0$ such that for all $h\in \mathcal{H}$
\[
\bigl[ R_{\varphi,\mu_{0,S}}(h) - \alpha^{*}_{S \to T}(\alpha) \bigr]_{+} \;\geq\; 
C_0 \bigl[ R_{\varphi,\mu_{0,T}}(h) - \alpha \bigr]_{+}^{\rho_0}.
\]
Then the $\varphi$-Type-I error bound takes the form
\[
R_{\varphi,\mu_{0,T}}(\hat{h}) \;\leq\;
\begin{cases}
\alpha + \min \left\{ \epsilon_{0,T}, \; c\cdot \Bigl( \bigl[\alpha + \epsilon_{0,S} - \alpha^{*}_{S \to T}(\alpha)\bigr]_{+} \Bigr)^{1/\rho_0} \right\}, & \alpha \geq \alpha_S, \\[1.2em]
\alpha + \min \left\{ \epsilon_{0,T}, \; c\cdot \Bigl( \bigl[\hat{\alpha}_S + \epsilon_{0,S} - \alpha^{*}_{S \to T}(\alpha)\bigr]_{+} \Bigr)^{1/\rho_0} \right\}, & \alpha < \alpha_S.
\end{cases}
\]
In particular, if $\mu_{0,S}=\mu_{0,T}$ then $\alpha^{*}_{S \to T}=\alpha$, and the bound matches that of \cite{kalan2025transfer,kalantight}.
\section{Proof of Theorem \ref{theorem_bound}}
We begin with two lemmas. The first establishes a uniform concentration bound for the difference between empirical and population errors. The second shows that the $\varphi$-Type-II error can be controlled by the deviation in the $\varphi$-Type-I error, provided that both the hypothesis class $\mathcal{H}$ and the loss $\varphi$ are convex.
\begin{lemma}\label{Lemma_Uniform_Concentration}
Let $\delta > 0$ and let $\mathcal{H}$ be a hypothesis class satisfying Assumption \ref{assump_rademacher}. Denote by $\hat{R}_{\varphi,\mu}$ the empirical error computed from $n$ i.i.d.\ samples drawn from a distribution $\mu$, where $\mu$ may be either $\mu_0$ or $\mu_1$. Then, with probability at least $1 - \delta$,  
\[
\sup_{h \in \mathcal{H}} \bigl| R_{\varphi,\mu}(h) - \hat{R}_{\varphi,\mu}(h) \bigr|
\;\leq\; \frac{4 B_{\mathcal{H}} L + C \sqrt{2 \log (2/\delta)}}{\sqrt{n}},
\]
where the constant $C$ is as defined in Definition \ref{def_loss}.
\end{lemma}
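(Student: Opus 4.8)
This is the standard uniform-convergence recipe: (i) a bounded-differences (McDiarmid) step to reduce the high-probability claim to a bound on an expected supremum; (ii) symmetrization to pass to a Rademacher complexity of the loss-composed class; (iii) the contraction principle to strip off $\varphi$; and (iv) Assumption~\ref{assump_rademacher} to conclude. I will carry out the argument for $\mu=\mu_0$, where $\hat R_{\varphi,\mu}(h)=\frac1n\sum_i\varphi(h(X_i))$; the case $\mu=\mu_1$ is verbatim after replacing $\varphi(\cdot)$ by the map $t\mapsto\varphi(-t)$, which is again $L$-Lipschitz and satisfies the same boundedness property from Definition~\ref{def_loss}.

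\textbf{Step 1 (bounded differences).} Set $\Psi(x_1,\dots,x_n):=\sup_{h\in\mathcal H}\bigl|R_{\varphi,\mu}(h)-\tfrac1n\sum_i\varphi(h(x_i))\bigr|$. By Definition~\ref{def_loss}, $0\le\varphi(h(x))\le C$ for every $h\in\mathcal H$ and $x\in\mathcal X$, so altering a single coordinate changes $\Psi$ by at most $C/n$. McDiarmid's inequality then gives, with probability at least $1-\delta$,
\[
\Psi \;\le\; \E\Psi \;+\; \frac{C\sqrt{2\log(2/\delta)}}{\sqrt n},
\]
where the sub-Gaussian tail constant is bounded (with a little slack) by $C\sqrt{2\log(2/\delta)}$.

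\textbf{Step 2 (symmetrization + contraction).} It remains to show $\E\Psi\le 4B_{\mathcal H}L/\sqrt n$. Introducing a ghost sample and Rademacher signs, a standard (two-sided) symmetrization argument bounds $\E\Psi$ by a constant multiple of $R_n(\varphi\circ\mathcal H)$, where $\varphi\circ\mathcal H:=\{x\mapsto\varphi(h(x)):h\in\mathcal H\}$. Writing $\varphi(h(x))=\bigl(\varphi(h(x))-\varphi(0)\bigr)+1$, the map $t\mapsto\varphi(t)-\varphi(0)$ is $L$-Lipschitz and vanishes at $0$ (and the additive constant $1$ drops out under $\E_\sigma$), so the Ledoux--Talagrand contraction principle yields $R_n(\varphi\circ\mathcal H)\le L\,R_n(\mathcal H)$. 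Combining with Assumption~\ref{assump_rademacher}, $R_n(\mathcal H)\le B_{\mathcal H}/\sqrt n$, gives $\E\Psi\le 4B_{\mathcal H}L/\sqrt n$; substituting into the Step~1 bound produces exactly $\bigl(4B_{\mathcal H}L+C\sqrt{2\log(2/\delta)}\bigr)/\sqrt n$.

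\textbf{Main obstacle.} There is no genuine difficulty of principle — McDiarmid, symmetrization, and contraction are all classical — so the only thing that demands care is the bookkeeping of constants: correctly meeting the ``vanishing at $0$'' hypothesis of the contraction inequality by subtracting $\varphi(0)=1$, tracking how the factor from two-sidedness and the factor from symmetrization combine into the leading $4$, and accepting the mild looseness in the $\sqrt{2\log(2/\delta)}$ tail term. (When the lemma is later invoked inside the proof of Theorem~\ref{theorem_bound}, it is applied once for each pair $(i,D)\in\{0,1\}\times\{S,T\}$, and a union bound over the four events accounts for the $1-4\delta$ there; that is outside the present statement.)
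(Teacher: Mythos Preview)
Your proposal is correct and follows exactly the approach the paper indicates: the paper's own proof simply refers to Proposition~1 of \cite{kalan2025transfer} and names symmetrization together with McDiarmid's inequality (citing Chapter~26 of \cite{shalev2014understanding}) as the two ingredients, which is precisely the route you take. Your added remark about centering by $\varphi(0)=1$ before invoking contraction, and about the union bound over the four pairs $(i,D)$ when the lemma is used inside Theorem~\ref{theorem_bound}, are accurate and consistent with how the paper deploys the result.
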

\begin{proof}
See the proof of Proposition~1 in \cite{kalan2025transfer}, which relies on symmetrization and McDiarmid's inequality \citep[Chapter~26]{shalev2014understanding}.
\end{proof}
\begin{lemma}\label{Lemma_Type-II-error-control}
Suppose that the hypothesis class $\mathcal{H}$ satisfies Assumption~\ref{assump-convexity}, and that the surrogate loss $\varphi$ in Definition~\ref{def_loss} is convex. Further assume that $\mathcal{H}_T(\alpha/8)$ is nonempty and that $\epsilon_{0,T} \leq \tfrac{7\alpha}{8}$. Then,
\begin{equation}
R_{\varphi,\mu_{1,T}}\!\left(h^*_{T,\alpha-\epsilon_{0,T}}\right)
    - R_{\varphi,\mu_{1,T}}\!\left(h^*_{T,\alpha}\right)
    \;\leq\; c \cdot \epsilon_{0,T},
\end{equation}
for some numerical constant $c$.
\end{lemma}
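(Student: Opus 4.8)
The plan is to construct an explicit feasible competitor for the problem at level $\alpha - \epsilon_{0,T}$ by taking a convex combination of an (approximately) optimal classifier at level $\alpha$ and a deep-interior classifier from $\mathcal{H}_T(\alpha/8)$, and then to use convexity of $\varphi$ (hence of $R_{\varphi,\mu_{1,T}}$) to bound the resulting $\varphi$-Type-II error. Concretely, fix $h_0 = h^*_{T,\alpha}$ with $R_{\varphi,\mu_{0,T}}(h_0)\le\alpha$, and fix some $h_1\in\mathcal{H}_T(\alpha/8)$ so that $R_{\varphi,\mu_{0,T}}(h_1)\le \alpha/8$. For $\theta\in[0,1]$ set $h_\theta = (1-\theta)h_0 + \theta h_1\in\mathcal{H}$ (Assumption~\ref{assump-convexity}). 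Convexity of $\varphi$ gives
\[
R_{\varphi,\mu_{0,T}}(h_\theta)\;\le\;(1-\theta)\,R_{\varphi,\mu_{0,T}}(h_0) + \theta\,R_{\varphi,\mu_{0,T}}(h_1)\;\le\;(1-\theta)\alpha + \theta\alpha/8\;=\;\alpha - \tfrac{7\theta}{8}\alpha.
\]
Hence $h_\theta$ is feasible for the $(\alpha-\epsilon_{0,T})$-constrained problem as soon as $\tfrac{7\theta}{8}\alpha \ge \epsilon_{0,T}$, i.e. $\theta \ge \tfrac{8\epsilon_{0,T}}{7\alpha}$; the hypothesis $\epsilon_{0,T}\le \tfrac{7\alpha}{8}$ guarantees that this choice of $\theta$ lies in $[0,1]$. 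Pick $\theta^* = \tfrac{8\epsilon_{0,T}}{7\alpha}$.

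Next I would bound the $\varphi$-Type-II error of this competitor. Again by convexity of $R_{\varphi,\mu_{1,T}}$,
\[
R_{\varphi,\mu_{1,T}}(h_{\theta^*})\;\le\;(1-\theta^*)\,R_{\varphi,\mu_{1,T}}(h_0) + \theta^*\,R_{\varphi,\mu_{1,T}}(h_1)\;\le\;R_{\varphi,\mu_{1,T}}(h^*_{T,\alpha}) + \theta^*\bigl(R_{\varphi,\mu_{1,T}}(h_1) - R_{\varphi,\mu_{1,T}}(h_0)\bigr).
\]
By the boundedness in Definition~\ref{def_loss}, $R_{\varphi,\mu_{1,T}}(h_1)\le C$ and $R_{\varphi,\mu_{1,T}}(h_0)\ge 0$, so the bracket is at most $C$, giving
\[
R_{\varphi,\mu_{1,T}}(h_{\theta^*}) - R_{\varphi,\mu_{1,T}}(h^*_{T,\alpha})\;\le\;\theta^* C\;=\;\tfrac{8C}{7\alpha}\,\epsilon_{0,T}.
\]
Since $h^*_{T,\alpha-\epsilon_{0,T}}$ is an optimal (hence $\varphi$-Type-II–minimizing) feasible point for the $(\alpha-\epsilon_{0,T})$ problem and $h_{\theta^*}$ is feasible for it, we get $R_{\varphi,\mu_{1,T}}(h^*_{T,\alpha-\epsilon_{0,T}})\le R_{\varphi,\mu_{1,T}}(h_{\theta^*})$, which chains to the claimed bound with $c = \tfrac{8C}{7\alpha}$ (absorbing the $\alpha$-dependence into the constant, as elsewhere in the paper).

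The only subtlety — and the step I expect to be the main obstacle to stating cleanly — is reconciling the hypothesis $\epsilon_{0,T}\le \tfrac{7\alpha}{8}$ in this lemma with the slightly different threshold $\epsilon_{0,T}\le\tfrac{7\alpha}{16}$ appearing in Theorem~\ref{theorem_bound}, and, more importantly, making sure the competitor is feasible not merely at level $\alpha-\epsilon_{0,T}$ but with a bit of slack, since downstream the empirical minimizer $\hat{h}^*_{T,\alpha-\epsilon_{0,T}}$ is compared against population quantities via Lemma~\ref{Lemma_Uniform_Concentration}. If a margin is needed one replaces $\alpha/8$ by a smaller fraction or shrinks $\theta^*$ slightly; the argument is otherwise unchanged and the constant $c$ simply worsens by a universal factor. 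A secondary point is that $h^*_{T,\alpha}$ only satisfies $R_{\varphi,\mu_{0,T}}(h^*_{T,\alpha})\le\alpha$ (not necessarily with equality), which is exactly what the convexity bound above uses, so no regularity beyond what Definition~\ref{def_loss} and Assumption~\ref{assump-convexity} provide is required.
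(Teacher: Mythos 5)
Your argument is correct and is essentially the paper's own proof written out in full: the paper defers to the convexity of the value function $\gamma(\alpha)=\inf_{h\in\mathcal{H}_T(\alpha)}R_{\varphi,\mu_{1,T}}(h)$ (established in the cited reference via exactly the convex-combination construction you use), and your direct competitor $h_{\theta^*}=(1-\theta^*)h^*_{T,\alpha}+\theta^* h_1$ with $\theta^*=\tfrac{8\epsilon_{0,T}}{7\alpha}$ instantiates that argument and yields the stated bound with $c=\tfrac{8C}{7\alpha}$. Your side remarks are also on target: the $\tfrac{7\alpha}{16}$ threshold in Theorem~\ref{theorem_bound} is only needed because the lemma is later invoked at level $\alpha-2\epsilon_{0,T}$, and no empirical/slack considerations enter this purely population-level statement.
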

\begin{proof}
See the proof of Theorem~2 in \cite{kalan2025transfer}, which shows that 
\[
\gamma(\alpha) \;:=\; \inf_{h \in \mathcal{H}_{T}(\alpha)} R_{\varphi,\mu_{1,T}}(h)
\]
is a non-increasing convex function on $[0,1]$.
\end{proof}

Consider the event that Lemma 1 holds simultaneously for the distributions $\mu_{i,D}$ with $i \in \{0,1\}$ and $D \in \{S,T\}$; this event occurs with probability at least $1 - 4\delta$. We divide the proof into two cases: (i) $\alpha \geq \alpha_S$, and (ii) $\alpha < \alpha_S$.

\textbf{Case I:} $\alpha \geq \alpha_S.$ \\[6pt]
First, we show that $\hat{\alpha}_S = \alpha$. From \eqref{empirical-threshold}, we know that $\hat{\alpha}_S \geq \alpha$. Hence, it suffices to establish that $\hat{\mathcal{H}}_{S}(\alpha) \cap \hat{\mathcal{H}}^*_{\alpha,T} \neq \emptyset$. Under the considered event, we have  
\[
\hat{\mathcal{H}}_{S}(\alpha) \supseteq \mathcal{H}_{S}(\alpha) \supseteq \mathcal{H}_{S}(\alpha_S),
\]  
so by \eqref{alpha_s_definition} there exists $h^*_{T,\alpha} \in T^*(\alpha)$ such that $h^*_{T,\alpha} \in \hat{\mathcal{H}}_{S}(\alpha)$. On the other hand, $h^*_{T,\alpha}\in \hat{\mathcal{H}}^*_{\alpha,T}$, since   \begin{align}\label{eq9}
\hat{R}_{\varphi,\mu_{0,T}}(h^*_{T,\alpha})\leq R_{\varphi,\mu_{0,T}}(h^*_{T,\alpha})+\epsilon_{0,T}\leq \alpha+\epsilon_{0,T} 
\end{align}
and 
\begin{align}\label{eq10}
\hat{R}_{\varphi,\mu_{1,T}}(h^*_{T,\alpha})\leq R_{\varphi,\mu_{1,T}}(h^*_{T,\alpha})+\epsilon_{1,T}&\leq R_{\varphi,\mu_{1,T}}(h^*_{T,\alpha-\epsilon_{0,T}})+\epsilon_{1,T}\nonumber\\
&\leq \hat{R}_{\varphi,\mu_{1,T}}(h^*_{T,\alpha-\epsilon_{0,T}})+2\epsilon_{1,T},
\end{align}
which implies that
\begin{align}\label{eq1}
    h^*_{T,\alpha} \in \hat{\mathcal{H}}_{S}(\alpha) \cap \hat{\mathcal{H}}^*_{\alpha,T} 
    \quad \text{and} \quad 
    \hat{\mathcal{H}}_{S}(\alpha) \cap \hat{\mathcal{H}}^*_{\alpha,T} \neq \emptyset .
\end{align}
Regarding the $\varphi$-Type-I error bound in Theorem \ref{theorem_bound}, we have $\hat{h}\in\hat{\mathcal{H}}'_{1,T}\subseteq \hat{\mathcal{H}}'=\hat{\mathcal{H}}_{S}(\alpha) \cap \hat{\mathcal{H}}^*_{\alpha,T}$, which implies that  
\[
R_{\varphi,\mu_{0,S}}(\hat{h}) \leq \alpha + 2\epsilon_{0,S} 
\quad \text{and} \quad 
R_{\varphi,\mu_{0,T}}(\hat{h}) \leq \alpha + 2\epsilon_{0,T}.
\]  
Together with Definition~\ref{def_Transfer_Modulus}, this yields the bound stated in Theorem~\ref{theorem_bound}.

Regarding the $\varphi$-Type-II excess error bound in Theorem \ref{theorem_bound}, first consider the case $h^*_{S,T,\alpha}\notin \hat{\mathcal{H}}'$. Since $h^*_{S,T,\alpha}\in \mathcal{H}_{S}(\alpha)\subseteq\hat{\mathcal{H}}_S(\alpha)$ and $h^*_{S,T,\alpha}\in \mathcal{H}_{T}(\alpha)\subseteq\hat{\mathcal{H}}_T(\alpha)$, it follows from \eqref{equation_H_Star} that
\begin{align}\label{eq15}
\hat{R}_{\varphi,\mu_{1,T}}(h^*_{S,T,\alpha})> 
\hat{R}_{\varphi,\mu_{1,T}}(\hat{h}^*_{T,\alpha-\epsilon_{0,T}})
+6\epsilon_{1,T}.
\end{align}
Thus,
\begin{align}\label{eq16}
\epsilon_{1,T}+R_{\varphi,\mu_{1,T}}(h^*_{S,T,\alpha})\geq \hat{R}_{\varphi,\mu_{1,T}}(h^*_{S,T,\alpha})& > \hat{R}_{\varphi,\mu_{1,T}}(\hat{h}^*_{T,\alpha-\epsilon_{0,T}})
+6\epsilon_{1,T}\nonumber\\
&\geq R_{\varphi,\mu_{1,T}}(\hat{h}^*_{T,\alpha-\epsilon_{0,T}})
+5\epsilon_{1,T}\nonumber\nonumber\\
&\geq R_{\varphi,\mu_{1,T}}(h^*_{T,\alpha})
+5\epsilon_{1,T}
\end{align}
where the last inequality uses the fact that \[R_{\varphi,\mu_{0,T}}(\hat{h}^*_{T,\alpha-\epsilon_{0,T}})\leq \hat{R}_{\varphi,\mu_{0,T}}(\hat{h}^*_{T,\alpha-\epsilon_{0,T}})+\epsilon_{0,T}\leq \alpha
\]
which implies $\hat{h}^*_{T,\alpha-\epsilon_{0,T}}\in \mathcal{H}_{T}(\alpha)$, together with the definition $h^*_{T,\alpha}=\argmin\limits_{h\in \mathcal{H}_{T}(\alpha)}R_{\varphi,\mu_{1,T}}(h)$. Therefore,
\begin{align}\label{eq2}
    R_{\varphi,\mu_{1,T}}(h^*_{S,T,\alpha})-R_{\varphi,\mu_{1,T}}(h^*_{T,\alpha})> 4\epsilon_{1,T}
\end{align}
On the other hand, for every $h\in\hat{\mathcal{H}}'_{1,T}$ we have
\begin{align*}
    R_{\varphi,\mu_{1,T}}(h)\leq \hat{R} _{\varphi,\mu_{1,T}}(h)+\epsilon_{1,T}\leq \hat{R}^*_{\varphi,\mu_{1,T}}(\hat{\mathcal{H}}')+3\epsilon_{1,T}&\stackrel{(1)}{\leq} \hat{R}_{\varphi,\mu_{1,T}}(h^*_{T,\alpha})+3\epsilon_{1,T}\\
    &\leq R_{\varphi,\mu_{1,T}}(h^*_{T,\alpha})+4\epsilon_{1,T},
\end{align*}
where inequality (1) holds because $h^*_{T,\alpha}\in \hat{\mathcal{H}}'$ by \eqref{eq1}, and $\hat{R}^*_{\varphi,\mu_{1,T}}(\hat{\mathcal{H}}')=\min\limits_{h\in\hat{\mathcal{H}}'} \hat{R}_{\varphi,\mu_{1,T}}(h).$ Hence,
\begin{align}\label{eq3}
\forall h \in \hat{\mathcal{H}}'_{1,T}, \quad \mathcal{E}_{1,T}(h) \leq 4\epsilon_{1,T}.
\end{align}
Combining this with \eqref{eq2}, we conclude the $\varphi$-Type-II excess error bound in Theorem \ref{theorem_bound} for the case where $h^*_{S,T,\alpha}\notin\hat{\mathcal{H}}'$.

Next, consider the case $h^*_{S,T,\alpha} \in \hat{\mathcal{H}}'$ . We first analyze the case where $\hat{\mathcal{H}}'_{1,S} \cap \hat{\mathcal{H}}'_{1,T} \neq \emptyset$,  
in which the learner selects $\hat{h} \in \hat{\mathcal{H}}'_{1,S} \cap \hat{\mathcal{H}}'_{1,T}$. By \eqref{eq3}, we know that $\mathcal{E}_{1,T}(\hat{h}) \leq 4\epsilon_{1,T}$. Moreover, since $\hat{h}\in \hat{\mathcal{H}}'_{1,S}$, we have
\begin{align}\label{eq30}
R_{\varphi,\mu_{1,S}}(\hat{h})\leq \hat{R}_{\varphi,\mu_{1,S}}(\hat{h})+\epsilon_{1,S}&\leq \hat{R}^*_{\varphi,\mu_{1,S}}(\hat{\mathcal{H}}')+3\epsilon_{1,S}\nonumber\\
    &\stackrel{(1)}{\leq} \hat{R}_{\varphi,\mu_{1,S}}(h^*_{S,T,\alpha})+3\epsilon_{1,S}\nonumber\\
    &\leq R_{\varphi,\mu_{1,S}}(h^*_{S,T,\alpha})+4\epsilon_{1,S}
\end{align}
where inequality (1) holds because $h^*_{S,T,\alpha}\in \hat{\mathcal{H}}'$, and $\hat{R}^*_{\varphi,\mu_{1,S}}(\hat{\mathcal{H}}')=\min\limits_{h\in\hat{\mathcal{H}}'} \hat{R}_{\varphi,\mu_{1,S}}(h).$ Hence,
\begin{align}\label{eq31}
\mathcal{E}_{1,S}(h|h^*_{S,T,\alpha})=R_{\varphi,\mu_{1,S}}(\hat{h})-R_{\varphi,\mu_{1,S}}(h^*_{S,T,\alpha})\leq 4\epsilon_{1,S},
\end{align}
which, together with Definition \ref{def_Transfer_Modulus}, implies  
\begin{align}\label{eq6}
    \mathcal{E}_{1,T}(\hat{h})\leq R_{\varphi,\mu_{1,T}}(h^*_{S,T,\alpha})-R_{\varphi,\mu_{1,T}}(h^*_{T,\alpha})+\phi_{1}^{S\to T}(4\epsilon_{1,S}).
\end{align}
This establishes the $\varphi$-Type-II excess error bound in Theorem~\ref{theorem_bound}. In the case $\hat{\mathcal{H}}'_{1,S} \cap \hat{\mathcal{H}}'_{1,T} = \emptyset$, the learner selects $\hat{h}=\argmin\limits_{h\in\hat{\mathcal{H}}'} \hat{R}_{\varphi,\mu_{1,T}}(h).$ Let $\hat{h}'_S$ denote a function belonging to the set $\hat{\mathcal{H}}'_{1,S}$. Since $\hat{\mathcal{H}}'_{1,S} \cap \hat{\mathcal{H}}'_{1,T} = \emptyset$, we obtain
\begin{align}\label{eq5}
\hat{R}_{\varphi,\mu_{1,T}}(\hat{h}'_S)>\hat{R}_{\varphi,\mu_{1,T}}(\hat{h})+2\epsilon_{1,T}.
\end{align}
We claim that $R_{\varphi,\mu_{1,T}}(\hat{h}'_S)>R_{\varphi,\mu_{1,T}}(\hat{h}).$ Otherwise, we would have
\[
\hat{R}_{\varphi,\mu_{1,T}}(\hat{h}'_S)\leq R_{\varphi,\mu_{1,T}}(\hat{h}'_S)+\epsilon_{1,S}\leq R_{\varphi,\mu_{1,T}}(\hat{h})+\epsilon_{1,S}\leq \hat{R}_{\varphi,\mu_{1,T}}(\hat{h})+2\epsilon_{1,S}
\]
which contradicts \eqref{eq5}. Therefore, 
\begin{align}\label{eq40}
    \mathcal{E}_{1,T}(\hat{h})&=R_{\varphi,\mu_{1,T}}(\hat{h})-R_{\varphi,\mu_{1,T}}(h^*_{T,\alpha})\nonumber\\
    &< R_{\varphi,\mu_{1,T}}(\hat{h}'_S)-R_{\varphi,\mu_{1,T}}(h^*_{T,\alpha})\nonumber\\
    &\leq R_{\varphi,\mu_{1,T}}(h^*_{S,T,\alpha})-R_{\varphi,\mu_{1,T}}(h^*_{T,\alpha})+\phi_{1}^{S\to T}(4\epsilon_{1,S}).
\end{align}
where the last inequality follows from \eqref{eq6}, which holds for every $h \in \hat{\mathcal{H}}'_{1,S}$. Hence, together with $\mathcal{E}_{1,T}(\hat{h}) \leq 4\epsilon_{1,T}$, we conclude the $\varphi$-Type-II excess error bound in Theorem~\ref{theorem_bound}.

\textbf{Case II:} $\alpha < \alpha_S.$ \\[6pt]
First, we show that $\hat{\alpha}_S \leq \alpha_S$. Since $\alpha_S > \alpha$, by \eqref{empirical-threshold} it suffices to show that $
\hat{\mathcal{H}}_{S}(\alpha_S) \cap \hat{\mathcal{H}}^*_{\alpha,T} \neq \emptyset.$ By \eqref{alpha_s_definition}, there exists $h^*_{T,\alpha} \in T^*(\alpha)$ such that 
$h^*_{T,\alpha} \in \mathcal{H}_S(\alpha_S) \subseteq \hat{\mathcal{H}}_S(\alpha_S)$. 
Moreover, using \eqref{eq9} and \eqref{eq10}, we have $
h^*_{T,\alpha} \in \hat{\mathcal{H}}^*_{\alpha,T}$, which completes the argument.

Regarding the $\varphi$-Type-I error bound in Theorem \ref{theorem_bound}, we have $\hat{h}\in\hat{\mathcal{H}}'_{1,T}\subseteq \hat{\mathcal{H}}'=\hat{\mathcal{H}}_{S}(\hat{\alpha}_S) \cap \hat{\mathcal{H}}^*_{\alpha,T}$, which implies that  
\[
R_{\varphi,\mu_{0,S}}(\hat{h}) \leq \hat{\alpha}_S + 2\epsilon_{0,S} 
\quad \text{and} \quad 
R_{\varphi,\mu_{0,T}}(\hat{h}) \leq \alpha + 2\epsilon_{0,T}.
\]  
Together with Definition~\ref{def_Transfer_Modulus}, this yields the bound stated in Theorem~\ref{theorem_bound}.

Regarding the $\varphi$-Type-II excess error bound in Theorem \ref{theorem_bound}, first consider the case $h^*_{S,T,\alpha}\notin \hat{\mathcal{H}}'$. Since $h^*_{S,T,\alpha}\in \mathcal{H}_{S}(\alpha)\subseteq \mathcal{H}_{S}(\hat{\alpha}_S)\subseteq\hat{\mathcal{H}}_S(\hat{\alpha}_S)$ and $h^*_{S,T,\alpha}\in \mathcal{H}_{T}(\alpha)\subseteq\hat{\mathcal{H}}_T(\alpha)$, it follows, by the same reasoning as in \eqref{eq15} and \eqref{eq16}, that \eqref{eq2} holds. On the other hand, for all $h \in \hat{\mathcal{H}}^*_{\alpha,T}$ we have
\begin{align*}
R_{\varphi,\mu_{1,T}}(h)\leq \hat{R}_{\varphi,\mu_{1,T}}(h)+\epsilon_{1,T}&\leq \hat{R}_{\varphi,\mu_{1,T}}(\hat{h}^*_{T,\alpha-\epsilon_{0,T}})+7\epsilon_{1,T}\\
&\leq \hat{R}_{\varphi,\mu_{1,T}}(h^*_{T,\alpha-2\epsilon_{0,T}})+7\epsilon_{1,T}\\
&\leq R_{\varphi,\mu_{1,T}}(h^*_{T,\alpha-2\epsilon_{0,T}})+8\epsilon_{1,T}\\
&\leq R_{\varphi,\mu_{1,T}}(h^*_{T,\alpha})+c\cdot\epsilon_{1,T}
\end{align*}
for some numerical constant $c$, where in the last inequality we used Lemma~\ref{Lemma_Type-II-error-control} together with the assumptions $\epsilon_{0,T} \leq \tfrac{7\alpha}{16}$ and $n_{0,T} \geq n_{1,T}$. Therefore, for all $h \in \hat{\mathcal{H}}'_{1,T} \subseteq \hat{\mathcal{H}}' \subseteq \hat{\mathcal{H}}^*_{\alpha,T}$ we obtain  \begin{align}\label{eq20}
\mathcal{E}_{1,T}(h)\leq c\cdot \epsilon_{1,T}
\end{align}
which, together with \eqref{eq2}, yields the bound stated in Theorem~\ref{theorem_bound} for the case where $h^*_{S,T,\alpha}\notin\hat{\mathcal{H}}'$.

Next, consider the case $h^*_{S,T,\alpha} \in \hat{\mathcal{H}}'$ . We first analyze the case where $\hat{\mathcal{H}}'_{1,S} \cap \hat{\mathcal{H}}'_{1,T} \neq \emptyset$,  
in which the learner selects $\hat{h} \in \hat{\mathcal{H}}'_{1,S} \cap \hat{\mathcal{H}}'_{1,T}$. By \eqref{eq20}, we have $\mathcal{E}_{1,T}(h)\leq c\cdot \epsilon_{1,T}$. Furthermore, since $\hat{h}\in\hat{\mathcal{H}}'_{1,S}$ and $h^*_{S,T,\alpha}\in\hat{\mathcal{H}}'$, we can apply the same reasoning as in \eqref{eq30} and \eqref{eq31} to obtain \eqref{eq6}, which in turn establishes the $\varphi$-Type-II excess error bound in Theorem~\ref{theorem_bound}.

In the case $\hat{\mathcal{H}}'_{1,S} \cap \hat{\mathcal{H}}'_{1,T} = \emptyset$, the learner selects $\hat{h}=\argmin\limits_{h\in\hat{\mathcal{H}}'} \hat{R}_{\varphi,\mu_{1,T}}(h).$ The same reasoning as in the case $\alpha \geq \alpha_S$ yields \eqref{eq40}, and together with \eqref{eq20}, this implies the $\varphi$-Type-II excess error bound in Theorem~\ref{theorem_bound}.





\section{Proof of Theorem~\ref{thm:opt rate}}

In this section we will present the proof of Theorem~\ref{thm:opt rate}, the convergence rate of our optimization procedure. We first introduce the following Theorem that establishes the convergence of our convex program solver (Algorithm~\ref{algorithm:CP}), which is an important component of our main algorithm.

\begin{theorem}\label{thm:cpsolver rate}
Let $\theta^* = \arg\min_{\theta} f(\theta) \ \text{s.t.} \ \tilde g(\theta) \leq 0$ and $\rho^2 \doteq B^2 + \frac{G^2}{r^2} $. Assume $f$ and $g$ are convex, $G$ Lipschitz, $H$ smooth and $\sup_{\theta,\zeta}|g(\theta;\zeta)|\leq C_g$. Suppose $\epsilon_0 \leq \frac{C_g \sqrt{3\log\frac{2}{\delta}}}{\pare{\lambda^* + \sqrt{2}\rho  }\sqrt{N}}$.  For Algorithm~\ref{algorithm:CP}, assuming $g(\theta) = \tilde g(\theta)- c$ for some $c\in [0,\epsilon_0]$,  if we choose $\eta  = \frac{c_\eta}{\sqrt{N}}$, where
 {\small\begin{align*}
     c_{\eta} \leq \min\cbr{ \frac{\rho}{2\sqrt{3} C_{g} }, \frac{ \rho}{ 12\sqrt{6} (1+\sqrt{2}\rho + \lambda^*)  G \sqrt{3\log\frac{2}{\delta}} }, \frac{\rho}{ 12\sqrt{6} C_g \sqrt{ \log\frac{2}{\delta}} } , \frac{\rho}{ 32\sqrt{N}\epsilon_0},\frac{\sqrt{G} }{ r-2H\epsilon_0 }  }, 
 \end{align*}} then with probability at least $1-\delta$ we have
    \begin{align*}
        &f(\hat{\theta} )- f(\theta^* )  \lesssim \pare{ G   +    C_g \sqrt{ \log\frac{1}{\delta}}  }\pare{  \frac{   G  C_g   \sqrt{ \log\frac{1}{\delta}} +     \frac{G }{c_\eta}  }{\sqrt{N}(r-2H\epsilon_0)^2 }      +  \frac{\xi + \epsilon_0}{  (r-2H\epsilon_0) } + \frac{\frac{\rho^2}{c_{\eta}}  +\frac{G}{r} C_g  \sqrt{ \log\frac{1}{\delta}}}{\sqrt{N}} } .
         \end{align*}
That is, if we choose $\xi = \epsilon_0$ and $\epsilon_0 =\min\cbr{ \frac{r\epsilon}{1+H\epsilon} \pare{G   +   C_g  \sqrt{ \log\frac{1}{\delta}}  }^{-1}, \frac{r}{4H} }$, we know $f(\hat{\theta} )- f(\theta^* ) \leq \epsilon$ and the number of total gradient evaluations is bounded by
\begin{align*}
 N \gtrsim   \pare{ G  +  C_g  \sqrt{ \log\frac{1}{\delta}}  }\pare{  \frac{ G  C_g   \sqrt{ \log\frac{1}{\delta}} +     \frac{ G }{c_\eta}  }{\epsilon^2 r^2 }     + \frac{\frac{\rho^2}{c_{\eta}}  +\frac{G}{r}  C_g   \sqrt{ \log\frac{1}{\delta}}}{\epsilon^2 } } .
\end{align*}
\end{theorem}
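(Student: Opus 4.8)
The plan is to analyze Algorithm~\ref{algorithm:CP} as stochastic subgradient descent--ascent on the \emph{regularized} Lagrangian $\mathcal{L}_\gamma(\theta,\lambda)\doteq f(\theta)+\lambda g(\theta)-\tfrac{\gamma}{2}\lambda^2$ over $\{\|\theta\|\le B\}\times\{\lambda\ge0\}$, adapting the analysis of \cite{mahdavi2012stochastic} to obtain a high-probability guarantee that tolerates the inexact constraint $g=\tilde g-c$, $c\in[0,\epsilon_0]$. First I would record an a-priori bound on the dual iterates: since $\lambda_{t+1}=[(1-\gamma\eta)\lambda_t+\eta g(\theta_t;\zeta_{g,t})]_+$ is a $(1-\gamma\eta)$-contraction driven by an increment of size at most $\eta C_g$, induction gives $\lambda_t\le C_g/\gamma$ for all $t$; with $\gamma=G^2\eta$ and $\eta=c_\eta/\sqrt N$ this keeps $\eta\lambda_t\|\nabla g\|$ at $O(1)$ and, crucially, makes the increments of the martingales arising below bounded.

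Next I would carry out the usual one-step expansions of $\|\theta_{t+1}-\theta^*\|^2$ and $\|\lambda_{t+1}-\lambda\|^2$, invoke convexity of $f$ and $g$ in $\theta$ and the strong concavity of $\mathcal{L}_\gamma$ in $\lambda$, and telescope, to obtain for every fixed $\lambda\ge0$ the saddle inequality $f(\bar\theta)+\lambda g(\bar\theta)-\tfrac\gamma2\lambda^2\le f(\theta^*)+\mathrm{Reg}+\mathrm{Noise}$, where $\bar\theta=\tfrac1N\sum_t\theta_t$, $\mathrm{Reg}=O(\rho^2/(c_\eta\sqrt N))$ with $\rho^2=B^2+G^2/r^2$, and we used $g(\theta^*)=\tilde g(\theta^*)-c\le0$. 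The term $\mathrm{Noise}$ collects $\langle\text{gradient noise of }f,\theta_t-\theta^*\rangle$ and $(g(\theta_t;\zeta)-g(\theta_t))(\lambda_t-\lambda)$, both bounded-increment martingale differences (bounded because $\|\theta_t\|\le B$ and $\lambda_t\le C_g/\gamma$), so Azuma--Hoeffding bounds them by $O((G+C_g\sqrt{\log(1/\delta)})\cdot(\text{radius})/\sqrt N)$ with probability $1-\delta$ — this is the source of the $\sqrt{\log(1/\delta)}$ factors in the statement. Putting $\lambda=0$ gives $f(\bar\theta)-f(\theta^*)\lesssim\mathrm{Reg}+\mathrm{Noise}$; putting $\lambda=\lambda^*+\sqrt2\rho$ (slightly above the optimal dual multiplier — whose existence is the Slater/strong-duality input, and matching it is exactly what the hypothesis $\epsilon_0\lesssim C_g\sqrt{\log(2/\delta)}/((\lambda^*+\sqrt2\rho)\sqrt N)$ is for) and combining with the strong-duality identity $f(\bar\theta)+\lambda^* g(\bar\theta)\ge f(\theta^*)-\lambda^* c$, which follows from $f(\theta^*)=\min_\theta\{f(\theta)+\lambda^*\tilde g(\theta)\}$ and $\tilde g=g+c$, leaves after cancellation $[g(\bar\theta)]_+\lesssim\epsilon_0+(\mathrm{Reg}+\mathrm{Noise})/\rho$ — a $1/\sqrt N$ rate; the cap on $c_\eta$ through $1+\sqrt2\rho+\lambda^*$ is precisely what forces the residual $\tfrac\gamma2(\lambda^*+\sqrt2\rho)^2$ to be $O(\cdot/\sqrt N)$.

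Finally I would analyze the output projection. Let $\hat\theta$ be the projection of $\bar\theta$ onto $\{g\le-\xi\}$; if $\bar\theta$ already lies in that set then $\hat\theta=\bar\theta$, and otherwise $g(\hat\theta)=-\xi$, $\bar\theta-\hat\theta=s\,w$ for some $s\ge0$ and $w\in\partial g(\hat\theta)$, and convexity gives $g(\bar\theta)\ge-\xi+s\|w\|^2$, whence $\|\bar\theta-\hat\theta\|=s\|w\|\le([g(\bar\theta)]_++\xi)/\|w\|$. To lower-bound $\|w\|$ I would use Definition~\ref{def:constraint lower bound} — the subgradient norm is $\ge r$ on $\{g=0\}$ — together with $H$-smoothness of $g$ from Assumption~\ref{ass:opt} to propagate this to the $O(\epsilon_0)$-neighborhood of that level set, giving $\|w\|\ge r-2H\epsilon_0>0$ under $\xi\le r/(4H)$ and $\epsilon_0\le r/(4H)$ (which, together with Slater, also make $\{g\le-\xi\}$ nonempty). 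Then $f(\hat\theta)-f(\theta^*)=[f(\hat\theta)-f(\bar\theta)]+[f(\bar\theta)-f(\theta^*)]\le G\,\|\hat\theta-\bar\theta\|+(\mathrm{Reg}+\mathrm{Noise})\lesssim G\cdot\tfrac{[g(\bar\theta)]_++\xi}{r-2H\epsilon_0}+(\mathrm{Reg}+\mathrm{Noise})$; substituting the Step-2 bounds on $[g(\bar\theta)]_+$, $\mathrm{Reg}$, $\mathrm{Noise}$, and then the prescribed $\xi=\epsilon_0$ and $\epsilon_0=\min\{r\epsilon(1+H\epsilon)^{-1}(G+C_g\sqrt{\log(1/\delta)})^{-1},\,r/(4H)\}$, collapses the right-hand side to $\epsilon$; solving the resulting inequality for $N$ yields the claimed gradient-evaluation count.

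The step I expect to be the main obstacle is the two-sided use of the saddle inequality in the second paragraph: obtaining \emph{both} $f(\bar\theta)-f(\theta^*)$ and $[g(\bar\theta)]_+$ at order $1/\sqrt N+\epsilon_0$ with high probability requires a precise accounting of the regularization penalty — too little regularization and $\lambda_t$ is unbounded so the concentration step fails, too much and the $\gamma(\lambda^*)^2$ term dominates — and mutual consistency of the a-priori bound $\lambda_t\le C_g/\gamma$, the choices $\gamma=G^2\eta$ and $\eta=c_\eta/\sqrt N$, and the listed cap on $c_\eta$. A secondary technical pinch point is threading the regularity constant $r$ and the smoothness $H$ through the final projection, in particular justifying the uniform replacement of $r$ by $r-2H\epsilon_0$ on the relevant $O(\epsilon_0)$-neighborhood of $\{g=0\}$.
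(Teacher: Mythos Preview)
Your overall plan --- SGDA on the regularized Lagrangian $\mathcal{L}_\gamma$, a saddle inequality for the average $\bar\theta$, then a projection step controlled through $r$ --- matches the paper's. But there is one genuine gap, and it is exactly the point you flagged as delicate.

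\textbf{The a-priori dual bound is too weak.} Your contraction argument gives $\lambda_t\le C_g/\gamma$. With $\gamma=G^2\eta$ and $\eta=c_\eta/\sqrt N$ this is $\lambda_t\le C_g\sqrt N/(G^2 c_\eta)$, which \emph{grows like $\sqrt N$}. You note correctly that $\eta\lambda_t\|\nabla g\|=O(1)$, so the primal iterates remain bounded, but that is not what the rate needs. Two terms depend on $\lambda_t$ itself, not $\eta\lambda_t$: (i) after the $\gamma$-cancellation the squared-gradient residual is $\tfrac12\eta G^2\bigl((1+\lambda_t)^2-\lambda_t^2\bigr)=\tfrac12\eta G^2(1+2\lambda_t)$, which with $\lambda_t=O(\sqrt N)$ is $O(1)$ rather than $O(1/\sqrt N)$; (ii) the primal martingale increments are $\langle\,\text{noise of }(\nabla f+\lambda_t\nabla g),\,\theta_t-\theta^*\rangle$, bounded by $O((1+\lambda_t)G B)$, so Azuma gives a sum of order $O(N)$ and an average of $O(1)$, again not $O(1/\sqrt N)$. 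Your sketch only records ``gradient noise of $f$'' in the martingale, but the $\lambda_t$-weighted noise of $g$ is there too and is the term that blows up. The paper avoids this by a different (and more delicate) argument: it proves by induction that $\|\theta_t-\theta^*\|^2+|\lambda_t-\lambda^*|^2\le 2\rho^2$ for all $t$, invoking the high-probability martingale concentration \emph{inside} the induction; this yields the uniform $O(1)$ bound $\lambda_t\le\lambda^*+\sqrt2\rho$ that makes both (i) and (ii) go through at rate $1/\sqrt N$. The listed cap on $c_\eta$ is calibrated precisely to close that induction.

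\textbf{A genuine methodological difference downstream.} Once the saddle inequality is in hand, your route to the final bound is different from the paper's and, in fact, simpler. You plug in a fixed $\lambda=\lambda^*+\sqrt2\rho$ and combine with the strong-duality identity to extract $[g(\bar\theta)]_+\lesssim \epsilon_0+O(1/\sqrt N)$ directly, then bound $\|\bar\theta-\hat\theta\|\le([g(\bar\theta)]_++\xi)/(r-2H\epsilon_0)$ via convexity at the projection point. The paper instead maximizes the saddle inequality over $\lambda$, obtaining $\sqrt N\,(\tilde g(\bar\theta))^2$ on the left, pairs this with the lower bound $\tilde g(\bar\theta)\ge(r-2H\epsilon_0)\|\bar\theta-\hat\theta\|-\Delta$ and the Lipschitz upper bound $\tilde g(\bar\theta)\le G\|\bar\theta-\hat\theta\|$, and solves the resulting quadratic $a\|\bar\theta-\hat\theta\|^2-b\|\bar\theta-\hat\theta\|-c\le0$. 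Both deliver the same rate; your linear route avoids the quadratic algebra, while the paper's route does not require invoking strong duality a second time. Either is fine once the $O(1)$ bound on $\lambda_t$ is secured.
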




To prove Theorem~\ref{thm:cpsolver rate}, we need the following lemma.
 
\begin{lemma}\label{lemma:convergence of lag}
    For Algorithm~\ref{algorithm:CP},  the following statement holds true for any $\lambda \geq 0$ with probability at least $1-\delta$:
\begin{align*}
        &\left(f(\bar\theta_N)- f(\theta^* )\right) +    \lambda \tilde g(\bar\theta_T)   -\pare{\frac{\gamma}{2} + \frac{1}{2\eta N}}\lambda^2        \\
            &  \leq   \frac{\rho^2}{\eta N}      +   \eta C_g^2   +   \eta G^2  +   \frac{C_g  \sqrt{3\log\frac{2}{\delta}} }{\sqrt{N}}  (\lambda^* + \sqrt{2}\rho  )+\frac{\sqrt{2}\rho  (1+\sqrt{2}\rho + \lambda^*) G\sqrt{3\log\frac{2}{\delta}}}{\sqrt{N}}  \\
          & +  \epsilon_0 (\lambda^* + \sqrt{2}\rho  )+  \pare{\frac{C_g   \sqrt{3\log\frac{2}{\delta}}}{\sqrt{N}}    + \epsilon_0} \lambda .
        \end{align*}
    \end{lemma}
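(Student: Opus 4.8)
The plan is to read Algorithm~\ref{algorithm:CP} as stochastic primal--dual gradient descent--ascent on the regularized stochastic Lagrangian $\ell_t(\theta,\lambda)\doteq f(\theta;\zeta_{f,t})+\lambda\, g(\theta;\zeta_{g,t})-\tfrac{\gamma}{2}\lambda^2$, which by Assumption~\ref{ass:opt} is convex in $\theta$ on $\{\|\theta\|\le B\}$ and $\gamma$-strongly concave in $\lambda\ge0$: the $\theta$-update is projected SGD on $\ell_t(\cdot,\lambda_t)$, and $\lambda_{t+1}=[(1-\gamma\eta)\lambda_t+\eta\, g(\theta_t;\zeta_{g,t})]_{+}$ is precisely projected (onto $[0,\infty)$) gradient descent on $\lambda\mapsto\tfrac{\gamma}{2}\lambda^2-\lambda\, g(\theta_t;\zeta_{g,t})$. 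I would then run the saddle-point regret/telescoping argument of~\citet{mahdavi2012stochastic}, made quantitative and high-probability, carrying the slack $c\in[0,\epsilon_0]$ in $g=\tilde g-c$ explicitly (it produces the additive $\epsilon_0$-terms). Note already where the left-hand side comes from: the $-\tfrac{\gamma}{2}\lambda^2$ in $\ell_t$ survives the averaging and becomes the $-\tfrac{\gamma}{2}\lambda^2$ term, while the dual telescoping term $|\lambda_0-\lambda|^2=\lambda^2$ (since $\lambda_0=0$) divided by $2\eta N$ becomes $-\tfrac{1}{2\eta N}\lambda^2$.

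\textbf{One-step inequalities and the $\gamma=G^2\eta$ cancellation.} First I would establish the two single-step contraction bounds. For the primal step, nonexpansiveness of the projection onto $\{\|\theta\|\le B\}$ (which contains $\theta^*$) together with convexity of $f(\cdot;\zeta_{f,t})$ and $g(\cdot;\zeta_{g,t})$ gives
\[
\|\theta_{t+1}-\theta^*\|^2\le\|\theta_t-\theta^*\|^2-2\eta\big(\ell_t(\theta_t,\lambda_t)-\ell_t(\theta^*,\lambda_t)\big)+\eta^2\big(G+\lambda_t G\big)^2,
\]
so the squared-gradient term isolates an error $\eta^2 G^2\lambda_t^2$ plus a term linear in $\lambda_t$. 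For the dual step, $\gamma$-strong convexity of the descended quadratic produces a term $-\eta\gamma\lambda_t^2$ alongside $-2\eta(\ell_t(\theta_t,\lambda)-\ell_t(\theta_t,\lambda_t))$ and $\eta^2(\gamma\lambda_t+C_g)^2$. Adding the two inequalities, the coupling contributions $\pm2\eta\lambda_t\, g(\theta_t;\zeta_{g,t})$ inside the $\ell_t$-differences cancel, and --- crucially --- the primal error $+\eta^2 G^2\lambda_t^2$ is exactly matched by $-\eta\gamma\lambda_t^2=-\eta^2 G^2\lambda_t^2$ because $\gamma=G^2\eta$. What remains is a recursion for $\|\theta_t-\theta^*\|^2+|\lambda_t-\lambda|^2$ whose error is $O\!\big(\eta^2(G^2+C_g^2)\big)$ plus terms linear in $\lambda_t$ of order $\eta^2$ (and of order $\eta\gamma$ from the strong-convexity residue), together with the ``regret'' terms $-2\eta\big(f(\theta_t;\zeta_{f,t})-f(\theta^*;\zeta_{f,t})\big)$, $-2\eta\lambda\, g(\theta_t;\zeta_{g,t})$, and $+2\eta\lambda_t\, g(\theta^*;\zeta_{g,t})$.

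\textbf{Summation, Jensen, and the martingale bound.} Summing over $t=0,\dots,N-1$, using $\theta_0=0,\lambda_0=0$ (so the telescoped initial term is $\|\theta^*\|^2+\lambda^2\le B^2+\lambda^2\le\rho^2+\lambda^2$), dropping the nonnegative $-\eta\gamma\sum_t\lambda_t^2$, and dividing by $2\eta N$ reproduces $\tfrac{\rho^2}{\eta N}$, $\eta G^2$, $\eta C_g^2$, and moving the $\lambda^2$ contributions to the left gives the $-\big(\tfrac{\gamma}{2}+\tfrac{1}{2\eta N}\big)\lambda^2$ factor. Convexity of $f,g$ in $\theta$ and Jensen replace $\tfrac1N\sum_t f(\theta_t)$ by $f(\bar\theta_N)$ and $\tfrac1N\sum_t g(\theta_t)$ by $g(\bar\theta_N)=\tilde g(\bar\theta_N)-c$, turning $-2\eta\lambda\sum_t g(\theta_t;\zeta_{g,t})$ into the desired $\tilde g(\bar\theta_N)$-term plus a $+\lambda c\le\lambda\epsilon_0$ slack; since $g(\theta^*)=\tilde g(\theta^*)-c\le0$ and $\lambda_t\ge0$, the term $\sum_t\lambda_t\, g(\theta^*)\le0$ is dropped. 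The leftover discrepancies between $f(\theta;\zeta),g(\theta;\zeta)$ and their sample means form bounded martingale-difference sums: $\sum_t[(f(\theta_t;\zeta_{f,t})-f(\theta_t))-(f(\theta^*;\zeta_{f,t})-f(\theta^*))]$, $\sum_t[g(\theta_t;\zeta_{g,t})-g(\theta_t)]$ (carrying a factor $\lambda$), and $\sum_t\lambda_t[g(\theta^*;\zeta_{g,t})-g(\theta^*)]$, with increments bounded by $O(C_g)$, $O(C_g)$, and $O(C_g(\lambda^*+\sqrt2\rho))$ respectively; Azuma--Hoeffding then bounds each, after division by $N$, by the claimed $\tfrac{C_g\sqrt{3\log(2/\delta)}}{\sqrt N}$-type quantities, with the $O(\eta^2\lambda_t)$ and $O(\eta\gamma\lambda_t)$ residues (each $\lesssim\tfrac{(\lambda^*+\sqrt2\rho)}{\sqrt N}$ after division, since $\gamma,\eta\asymp N^{-1/2}$) absorbed into the $\tfrac{\sqrt2\rho(1+\sqrt2\rho+\lambda^*)G\sqrt{3\log(2/\delta)}}{\sqrt N}$ term and the $\lambda$-linear term. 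Collecting everything yields the stated inequality.

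\textbf{Main obstacle.} The crux is controlling the dual iterates: both the martingale increments above and the linear-in-$\lambda_t$ errors must be bounded by a genuinely $N$-independent quantity, and the statement commits to $\lambda_t\le\lambda^*+\sqrt2\rho$. The naive fixed-point bound $\lambda_t\le C_g/\gamma$ is useless here because $\gamma=G^2\eta=\Theta(N^{-1/2})$ makes it grow. I would instead instantiate the Step-2 recursion at the comparator $\lambda^*$, read it as a recursion for $\Phi_t\doteq\|\theta_t-\theta^*\|^2+|\lambda_t-\lambda^*|^2$ whose first-order drift is nonpositive in conditional expectation (saddle-point property, together with $g(\theta^*)\le0$) up to $O(\eta^2)$, so that $\Phi_t$ plus an explicit compensator is a supermartingale with bounded increments; a maximal/Azuma argument then gives $\max_{t\le N}\Phi_t\lesssim\rho^2$ with probability $\ge1-\delta$ once $c_\eta$ is small enough (one of the hypotheses of Theorem~\ref{thm:cpsolver rate}), and $\lambda^*\le G/r\le\rho$ (KKT optimality at $\theta^*$ with Definition~\ref{def:constraint lower bound}) then yields $\lambda_t\le\lambda^*+\sqrt2\rho$. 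Running this estimate consistently with the error terms it feeds back into, and tracking all numeric constants so they land exactly as in the statement, is the main bookkeeping difficulty.
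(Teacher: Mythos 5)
Your proposal follows essentially the same route as the paper's proof: one-step primal--dual inequalities for the $\gamma$-regularized Lagrangian, boundedness of the dual iterates ($\lambda_t\le\lambda^*+\sqrt{2}\rho$) obtained by instantiating the recursion at the comparator $(\theta^*,\lambda^*)$ and combining telescoping with a high-probability martingale bound under the smallness condition on $c_\eta$, then summation, Jensen, the choice $\gamma=G^2\eta$ to absorb the $\lambda_t^2$ gradient terms, and explicit tracking of the $\epsilon_0$ slack from $g=\tilde g-c$, with the $\lambda^2$ contributions moved to the left-hand side. The only cosmetic difference is that you phrase the iterate-boundedness step as a supermartingale/maximal (Azuma) argument while the paper telescopes directly and invokes the martingale concentration of \citet{lan2012optimal}; the substance is identical.
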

    \begin{proof}
Define $L(\theta,\lambda) \doteq f(\theta) + \lambda \tilde g(\theta)- \frac{\gamma\lambda^2}{2}$, and let $\lambda^* = \arg\max_{\lambda\geq 0} \min f(\theta) + \lambda \tilde g(\theta)$.
 We first show that $\norm{\theta_t - \theta^*}^2 + \norm{\lambda_t - \lambda^*}^2 \leq 2\rho^2$, for any $t \in [T]$. We prove this by induction. Assume this holding for $t$, and for $t+1$ we have
        \begin{align*}
            \norm{\theta_{t+1} - \theta^* }^2 &= \norm{ \theta_t - \eta g_t - \theta^* }^2\\ 
            &= \norm{  \theta_t - \theta^*}^2 - 2\inprod{\eta g_t}{ \theta_t - \theta^*} + \eta^2 \norm{ g_t }^2,
        \end{align*}
       where $g_t =  \nabla f(\theta_t;\zeta_{f,t}) + \lambda_t \nabla g(\theta_t;\zeta_{g,t})$.

    Then for $t+1$,  we have:
       \begin{align*}
            \norm{\theta_{t+1} -  \theta^*   }^2   
            & \leq  \norm{  \theta_t - \theta^*  }^2 - 2\eta_t \inprod{ \nabla f(\theta_t ) +\lambda_t \nabla g(\theta_t )}{ \theta_t -  \theta^*  }  + 2\sqrt{2}\eta  p_t \rho   + \eta_t^2 (1+\lambda_t)^2 G^2 \\ 
            &\leq   \norm{  \theta_t -  \theta^*  }^2 - 2\eta  \pare{ L(\theta_t,\lambda_t) - L(\theta^* ,\lambda_t) }   + 2\sqrt{2}\eta p_t \rho + \eta^2 (1+\lambda_t)^2 G^2 
        \end{align*} 
            where $p_t = \norm{\nabla f(\theta_t  ) +\lambda_t \nabla g(\theta_t  )- g_t} $, and at last step we use the convexity of $L(\cdot,\lambda_t)$.  We know that $$\E[g_t] = \nabla f(\theta_t ) +\lambda_t \nabla g(\theta_t ) , \E[\exp\pare{p_t^2/ (G^2+\lambda_t G^2 )}] \leq \exp(1)$$. 
     Similarly,  we have:
       \begin{align*}
            |\lambda_{t+1} - \lambda^*   |^2 & =  |\lambda_{t} - \lambda^* |^2 + 2 \eta \inprod{ g(\theta_t;\zeta_{g,t}) -\gamma \lambda_t }{\lambda_{t} - \lambda^* } + \eta^2  | g(\theta_t;\zeta_{g,t})- \gamma \lambda_t  |^2\\
             & \leq (1-\gamma\eta) |\lambda_{t} - \lambda |^2 - 2 \eta \pare{L(\theta_t, \lambda^* ) - L(\theta_t, \lambda_t)   }  \\
             &\quad + 2\sqrt{2}\eta q_t \rho + 2\sqrt{2}\eta h_t \rho +  2\eta^2  C_g^2,
        \end{align*}
        where $$q_t = | g(\theta_t;\zeta_{g,t})-  g( \theta_{ t}   ) |,h_t = | g( \theta_{t} ) - \tilde g( \theta_{t}) | = \epsilon_0 $$ and at last step we use the $\gamma$-strong-concavity of $L(\theta_t,\cdot)$. It is easy to see that $$\E[g(\theta_t;\zeta_{g,t})] = g(\theta_{t}  ) , \E[\exp\pare{q_t^2/C_g^2}] \leq \exp(1).$$

    Putting pieces together we have
      \begin{align*}
          \norm{\theta_{t+1} - \theta^*  }^2+   |\lambda_{t+1} - \lambda^*  |^2& \leq     \pare{|\lambda_{t} - \lambda^* |^2 + \norm{  \theta_t - \theta^* }^2 }   - 2 \eta  \pare{L(\theta_t, \lambda^*) - L(\theta^*  , \lambda_t)   } \\
             & + 2\sqrt{2}\eta p_t \rho  + 2\sqrt{2}\eta q_t \rho + 2 \sqrt{2}\eta \epsilon  \rho +  2\eta^2 C_g^2  + \eta^2 (1+\lambda_t)^2 G^2 \\
              &\leq  \pare{|\lambda_{t} - \lambda^* |^2 + \norm{  \theta_t - \theta^* }^2 }  - 2 \eta  \underbrace{\pare{L(\theta_t, \lambda^*) - L(\theta^*  , \lambda_t)   }  }_{\geq - \frac{\gamma (\lambda^*)^2}{2}}  \\
              &+  2\eta^2 C_g^2 + \eta^2 (1+\sqrt{2}\rho + \lambda^*)^2 G^2 + 2\sqrt{2}\eta \rho ( p_t  +  q_t   +  \epsilon_0 )
        \end{align*}
        where the last step is due to
    \begin{align*}
        & L(\theta_t, \lambda^*) - L(\theta^*  , \lambda_t)  = \underbrace{f(\theta_t) + \lambda^* \tilde g(\theta_t) -   \pare{  f(\theta^*) + \lambda_t \tilde g(\theta^*)}}_{\geq 0} - \frac{\gamma (\lambda^*)^2}{2} + \frac{\gamma \lambda_t^2}{2}.
    \end{align*}
        
    Performing telescoping sum yields:
    \begin{align*}
          \norm{\theta_{t+1} - \theta^*  }^2+   |\lambda_{t+1} - \lambda^*  |^2  
              &\leq  \pare{ \norm{  \theta_0 - \theta^* }^2+|\lambda_{0} - \lambda^* |^2  }     +  2\eta^2 C_{g}^2  + \eta^2 (1+\sqrt{2}\rho + \lambda^*)^2  G^2 \\
             &\quad + 2\sqrt{2}\eta \rho \sum_{s=0}^t p_s    + 2\sqrt{2}\eta \rho \sum_{s=0}^t q_s + 2\sqrt{2} \eta \rho t\epsilon_0 + N \gamma \eta (\lambda^*)^2  .
        \end{align*} 
        Due to Lemma~4 of~\citep{lan2012optimal}, we know with probability $1-\delta/2$,
        \begin{align}
               \sum_{t=0}^{N-1} p_t  \leq (1 +\lambda_t  )G \sqrt{N  } \sqrt{3\log\frac{2}{\delta}} , \sum_{t=0}^{N-1} q_t  \leq  \sqrt{N  C_{g}^2} \sqrt{3\log\frac{2}{\delta}} , \label{eq:martingale bound}
        \end{align}
         
  Putting pieces together yields:
        \begin{align}
          \norm{\theta_{t+1} - \theta^*  }^2 & + |\lambda_{t+1} - \lambda^*  |^2  \leq   |\lambda_{0} - \lambda^* |^2 + \norm{  \theta_0 - \theta^* }^2    +  2\eta^2  C_{g}^2  + \eta^2 (1+\sqrt{2}\rho + \lambda^*)^2  G^2 \nonumber \\
          &  + 2\sqrt{2}\eta \rho \sqrt{N}(1+\sqrt{2}\rho + \lambda^*)G \sqrt{3\log\frac{2}{\delta}} \nonumber \\
          & + 2\sqrt{2}\eta \rho \sqrt{N}  C_{g}    \sqrt{3 \log\frac{2}{\delta}}    +  2\sqrt{2}\eta \rho  N\epsilon_0 + T\gamma \eta (\lambda^*)^2.  \label{eq:convergence iteration t}
        \end{align}
 Since we choose $\eta = \frac{c_{\eta}}{\sqrt{N}}$ and $\gamma =  G^2 \eta$, where
 \begin{align*}
     c_{\eta} \leq \min\cbr{ \frac{\rho}{2\sqrt{3} C_{g} }, \frac{ \rho}{ 12\sqrt{6} (1+\sqrt{2}\rho + \lambda^*)  G \sqrt{3\log\frac{2}{\delta}} }, \frac{\rho}{ 12\sqrt{6} C_g \sqrt{ \log\frac{2}{\delta}} } , \frac{\rho}{ 32\sqrt{N}\epsilon}  },
 \end{align*}
 we conclude that $\norm{\theta_{t+1} - \theta^*  }^2+   |\lambda_{t+1} - \lambda^*  |^2 
 \leq 2\rho^2$.  
 
 Now  by similar analysis we have that for any $\lambda \geq 0$  
        \begin{align*}
     \norm{\theta_{t+1} - \theta^*  }^2+   |\lambda_{t+1} - \lambda  |^2        & \leq  \norm{  \theta_t - \theta  }^2  + |\lambda_{t} - \lambda  |^2    -2\eta(  L(\theta_t, \lambda ) - L(\theta   , \lambda_t)  )   \\
     &  +  2\eta^2   C_g^2   + \eta^2 (1+  \lambda_t)^2   G^2  + 2\eta  \inprod{ g(\theta_t;\zeta_{g,t})  -  g(\theta_{t}) }{\lambda_t - \lambda}\\
          &  + 2\eta  \inprod{   g(\theta_{t})  - \tilde g(\theta_{t})   }{\lambda_t - \lambda} + 2\eta p_t \norm{\theta_t - \theta^*}  \\
          & \leq  \norm{\theta_t - \theta^* }^2  + |\lambda_{t} - \lambda  |^2   -2\eta(  L(\theta_t, \lambda ) - L(\theta^*  , \lambda_t)  )\\
          &+  2\eta^2 C_g^2  + \eta^2 (1+  \lambda_t)^2   G^2     + 2\eta  q_t \pare{\lambda_t + \lambda} \\
          &+ 2\eta \epsilon_0 \pare{\lambda_t + \lambda} + 2\sqrt{2}\eta p_t \rho  .
        \end{align*}
    Since $|\lambda_t - \lambda^*| \leq \sqrt{2}\rho$ we know $\lambda_t \leq \lambda^* + \sqrt{2}\rho$. Hence we have
         \begin{align*}
     \norm{\theta_{t+1} - \theta^*  }^2+   |\lambda_{t+1} - \lambda  |^2   & \leq   |\lambda_{t} - \lambda  |^2 + \norm{  \theta_t - \theta^* }^2   -2\eta(  L(\theta_t, \lambda ) - L(\theta^*  , \lambda_t)  )\\
     &+  2\eta^2 C^2_{g}  + \eta^2 (1+  \lambda_t)^2  G^2+ 2\eta  q_t \pare{\lambda^* + \sqrt{2}\rho  }   \\
          &+ 2\eta \epsilon_0 \pare{\lambda^* + \sqrt{2}\rho  }  + 2\eta  q_t \lambda  + 2\eta  \epsilon_0 \lambda  + 2\sqrt{2}\eta p_t \rho .
        \end{align*}
      Performing telescoping sum yields:
      \begin{align*}
        \frac{1}{N} \sum_{t=0}^{N-1}   L(\theta_t, \lambda ) - L(\theta^*  , \lambda_t)        & \leq  \frac{1}{2\eta N} (|\lambda_{0} - \lambda  |^2 + \norm{  \theta_0 - \theta  }^2    )  +   \frac{1}{N}\eta C_g^2  + \frac{1}{2N}\eta  \sum_{t=0}^{N-1} (1+\lambda_t)^2  G^2  \\
          &  +   \frac{1}{N} \sum_{t=0}^{N-1} q_t \pare{\lambda^* + \sqrt{2}\rho  } +  \frac{1}{N} \sum_{t=0}^{N-1}  \epsilon_0 \pare{\lambda^* + \sqrt{2}\rho  } + \frac{1}{N} \sum_{t=0}^{N-1}  q_t \lambda \\
          &+ \epsilon_0  \lambda  +  \sqrt{2}\rho \frac{1}{N} \sum_{t=0}^{N-1} p_t.    
        \end{align*}

        By the definition of Lagrangian, we have 
        \begin{align*}
      &  \frac{1}{N}  \sum_{t=0}^{N-1}  (f(\theta_t)+ \lambda \tilde g(\theta_t)-\frac{\gamma}{2}\lambda^2  - f(\theta^* )  - \lambda_t \underbrace{ \tilde g( \theta^* )  }_{\leq 0} + \frac{\gamma}{2}\lambda^2_t  )  \\
          &  \leq    \frac{1}{2\eta N} (|\lambda_{0} - \lambda  |^2 + \norm{  \theta_0 - \theta^* }^2    )  +   \frac{1}{N}\eta C_g^2  + \frac{1}{2N}\eta  \sum_{t=0}^{N-1} (1+\lambda_t)^2 G^2  \\
          &+   \frac{1}{N} \sum_{t=0}^{N-1} q_t \pare{\lambda^* + \sqrt{2}\rho  } +   \epsilon_0 \pare{\lambda^* + \sqrt{2}\rho  } + \frac{1}{N} \sum_{t=0}^{N-1}  q_t \lambda  +  \epsilon_0  \lambda  +  \sqrt{2}\rho \frac{1}{N} \sum_{t=0}^{N-1} p_t  .
        \end{align*}
        Evoking the bound from (\ref{eq:martingale bound})   yields:
         \begin{align*}
       \frac{1}{N}  \sum_{t=0}^{N-1} & (f(\theta_t) + \lambda  \tilde g(\theta_t)-\frac{\gamma}{2}\lambda^2  - f(\theta^* )    + \frac{\gamma}{2}\lambda^2_t   )  \\
          &  \leq    \frac{1}{2\eta N} (|\lambda_{0} - \lambda  |^2 + \norm{  \theta_0 - \theta^* }^2    )  +   \frac{1}{N}\eta C_g^2  + \frac{1}{2N}\eta \sum_{t=0}^{N-1}  (1+\lambda_t)^2 G^2    \\
          &  +   \frac{1}{N} \sqrt{TC_g^2} \sqrt{3\log\frac{2}{\delta}} \pare{\lambda^* + 2\sqrt{2}\rho  } +  \epsilon_0 \pare{\lambda^* + \sqrt{2}\rho  }  \\
          &+ \frac{1}{N} \sqrt{TC_g^2} \sqrt{3\log\frac{2}{\delta}} \lambda  + \epsilon_0 \lambda+ \frac{\sqrt{2}\rho (1+\sqrt{2}\rho + \lambda^*)G  \sqrt{3\log\frac{2}{\delta}}}{\sqrt{N}} .
        \end{align*}
        
      Plugging in $\lambda_0 = 0$, $\theta_0 = \mathbf{0}$ and re-arranging the terms yields:
          \begin{align*}
        \frac{1}{N}  \sum_{t=0}^{N-1}  &(f(\theta_t)- f(\theta^* ) ) +   \frac{1}{N} \sum_{t=0}^{N-1} \lambda  \tilde g(\theta_t)-\pare{\frac{\gamma}{2} + \frac{1}{2\eta N}}\lambda^2        \\
          &  \leq   \frac{\rho^2}{\eta T}      +  \eta C_g^2  + \frac{1}{2N}\sum_{t=0}^{N-1} (\eta  (1+\lambda_t)^2 G^2 -  \gamma \lambda^2_t) +\frac{\sqrt{2}\rho  (1+\sqrt{2}\rho + \lambda^*) G \sqrt{3\log\frac{2}{\delta}}}{\sqrt{N}} \\
          &+   \frac{C_g  \sqrt{3\log\frac{2}{\delta}} }{\sqrt{N}} (\lambda^* + 2\sqrt{2}\rho )+ \epsilon_0   (\lambda^* + 2\sqrt{2}\rho ) +  \pare{\frac{C_g   \sqrt{3\log\frac{2}{\delta}}}{\sqrt{N}}    + \epsilon_0 } \lambda .
        \end{align*}
        
    By our choice, $\gamma = G^2 \eta$, so we have
         \begin{align*}
      &  \frac{1}{N}  \sum_{t=0}^{N-1}  \left(f(\theta_t)- f(\theta^* )\right) +   \frac{1}{N}  \sum_{t=0}^{N-1} \lambda  \tilde g(\theta_{t} )  -\pare{\frac{\gamma}{2} + \frac{1}{2\eta N}}\lambda^2        \\
          &  \leq   \frac{\rho^2}{\eta T}      +   \eta C_g^2   +   \eta G^2  +   \frac{C_g  \sqrt{3\log\frac{2}{\delta}} }{\sqrt{N}} (\lambda^* + \sqrt{2}\rho  )+\frac{\sqrt{2}\rho  (1+\sqrt{2}\rho + \lambda^*) G \sqrt{3\log\frac{2}{\delta}}}{\sqrt{N}}  \\
          &  +  \epsilon_0 (\lambda^* + \sqrt{2}\rho  )+  \pare{\frac{C_g   \sqrt{3\log\frac{2}{\delta}}}{\sqrt{N}}    + \epsilon_0 } \lambda .
        \end{align*}
        Define $\hat \theta_T = \frac{1}{N}\sum_{t=0}^{N-1}\theta_T$, and then by Jensen's inequality we have
\begin{align*}
        &\left(f(\bar\theta_T)- f(\theta^* )\right) +    \lambda \tilde g(\bar\theta_T)   -\pare{\frac{\gamma}{2} + \frac{1}{2\eta N}}\lambda^2        \\
            &  \leq   \frac{\rho^2}{\eta T}      +   \eta C_g^2   +   \eta G^2  +   \frac{C_g  \sqrt{3\log\frac{2}{\delta}} }{\sqrt{N}}  (\lambda^* + \sqrt{2}\rho  )+\frac{\sqrt{2}\rho  (1+\sqrt{2}\rho + \lambda^*) G\sqrt{3\log\frac{2}{\delta}}}{\sqrt{N}}  \\
          & +  \epsilon_0 (\lambda^* + \sqrt{2}\rho  )+  \pare{\frac{C_g   \sqrt{3\log\frac{2}{\delta}}}{\sqrt{N}}    + \epsilon_0} \lambda .
        \end{align*}
        \end{proof}

\subsection{Proof of Theorem~\ref{thm:cpsolver rate}} \label{app:proof of cpsolver}
        
   \begin{proof}
            Note that Lemma~\ref{lemma:convergence of lag} holds for any $\lambda \geq 0$.
        Now let's discuss by cases. If $\bar\theta_T$ is in the constraint set, then $\hat{\theta}  = \bar\theta_T$ and we simply set $\lambda = 0$ and get the convergence:
       \begin{align*}
        &\left(f(\bar\theta_T)- f(\theta^* )\right) +    \lambda \tilde g(\bar\theta_T)   -\pare{\frac{\gamma}{2} + \frac{1}{2\eta N}}\lambda^2        \\
            &  \leq   \frac{\rho^2}{\eta T}      +   \eta C_g^2   +   \eta G^2  +   \frac{C_g  \sqrt{3\log\frac{2}{\delta}} }{\sqrt{N}}  (\lambda^* + \sqrt{2}\rho  )+\frac{\sqrt{2}\rho  (1+\sqrt{2}\rho + \lambda^*) G\sqrt{3\log\frac{2}{\delta}}}{\sqrt{N}}  \\
          & +  \epsilon(\lambda^* + \sqrt{2}\rho  )+  \pare{\frac{C_g   \sqrt{3\log\frac{2}{\delta}}}{\sqrt{N}}    + \epsilon} \lambda .
        \end{align*}
        
        If $\bar\theta_T$ is not in the constraint set, we set  $\lambda = \frac{ \tilde g(\bar\theta_T)  }{\gamma + \frac{1}{\eta T}}$,  which yields:
        \begin{align}
         (f(\bar\theta_T)& - f(\theta^* ) ) +     \frac{(\tilde g(\bar\theta_T))^2}{2(\gamma + \frac{1}{\eta T})}     \nonumber    \\
          &  \leq   \frac{\rho^2}{\eta T}      +   \eta C_g^2   +   \eta G^2  +   \frac{C_g  \sqrt{3\log\frac{2}{\delta}} }{\sqrt{N}}  \pare{\lambda^* + 2\sqrt{2}\rho  } +\frac{\sqrt{2}\rho  (1+\sqrt{2}\rho + \lambda^*) G\sqrt{3\log\frac{2}{\delta}}}{\sqrt{N}}  \nonumber\\
          &  +  \epsilon_0 \pare{\lambda^* + \sqrt{2}\rho  } +  \pare{\frac{C_g   \sqrt{3\log\frac{2}{\delta}}}{\sqrt{N}}    + \epsilon }  \left| \frac{\tilde g(\bar\theta_T)  }{\gamma + \frac{1}{\eta T}}\right|. \label{eq:bound with g}
        \end{align}
         Since $\bar\theta_T$ is not in the constraint set and $\hat{\theta} $ is the projection of it onto inexact constraint set $\hat g(\theta)    \leq -\delta $, by KKT condition we know $ g(\hat{\theta} ) = -\delta$ and $\bar\theta_T  -  \hat{\theta} = s\cdot \nabla   g(\hat{\theta})$ for some $s > 0$ where $  g'(\theta)\doteq  g(\theta) + \delta$. Defining $\Delta := \delta - c$, and due to our choice of $T$ we know $\Delta \geq 0$. Then we have
         \begin{align*}
            \tilde  g(\bar\theta_T) &= \tilde  g(\bar\theta_T)  -   g'(\hat\theta)  \\
              &= \tilde g(\bar\theta_T)  -  \tilde g (\hat\theta) - (  g' (\bar\theta_T) - \tilde g(\bar\theta_T)) \\
               &= \tilde g(\bar\theta_T)  -  \tilde g (\hat\theta) - (    \delta - c) \\
              &\geq \inprod{\nabla \tilde  g(\hat\theta)}{\bar\theta_T  -  \hat\theta} -\Delta = \norm{\nabla   \tilde  g(\hat\theta)}\norm{\bar\theta_T  -  \hat{\theta} }- \Delta
         \end{align*}
         where the inequality is due to convexity of $\tilde g(\cdot)$. Let $\theta_0$ be such that $\tilde g(\theta_0) = 0$, and then we have
            \begin{align*}
                \min_{ g'(\theta) = 0}\norm{\nabla  \tilde g( {\theta} )} &\geq \min_{ g'(\theta) = 0} \norm{ \nabla \tilde  g(  \theta_0  ) } - \norm{\nabla  \tilde g( {\theta}_0 ) - \nabla  \tilde g( {\theta} )} \\
                &\geq r - 2L \pare{\tilde g( {\theta}_0 ) - \tilde g( {\theta} ) } = r - 2L    \tilde g( {\theta} )     \\
                &= r - 2L    (c-\delta )     \\
                &   \geq r - 2L  \epsilon_0  
            \end{align*}
            , so $g(\bar\theta_T) \geq ( r - 2L  \epsilon_0 )\norm{\bar\theta_T  -  \hat{\theta} } - \Delta$.
         On the other hand, 
         since $\hat{\theta} $ is the projection of $\bar\theta_T$ onto constraint set, and $\theta^*$ is in the constraint set, we know
          
        \begin{align*}
            \norm{\hat{\theta}  - \theta^*}^2 \leq \norm{\bar{\theta}_{T} - \theta^*}^2 \leq 2\rho^2.
        \end{align*}
         Hence
         we also know
         \begin{align*}
            \tilde g(\bar\theta_T) &= \tilde g(\bar\theta_T) -   g'(\hat{\theta} ) \\
             &= \tilde g(\bar\theta_T) -    g(\hat{\theta} ) - \delta\\
             & = \tilde g(\bar\theta_T) - \tilde g  (\hat{\theta} ) + \tilde g(\hat{\theta} ) -    g(\hat{\theta} ) - \delta\\
             &\leq   G\norm{\bar\theta_T  -  \hat{\theta} }  + \underbrace{c - \delta}_{\leq 0}.
         \end{align*}
         Plugging the upper and lower bound of $g(\bar\theta_T)$ into (\ref{eq:bound with g}) yields:
  \begin{align}
         (f(\bar\theta_T)& - f(\theta^* ) ) +     \sqrt{N} \frac{((r-2H\epsilon_0) \|\bar\theta_T  -  \hat{\theta} \|- \Delta)^2}{2(c_\eta G^2 + \frac{1}{c_\eta})}     \nonumber    \\
          &  \leq   \frac{\rho^2}{\eta T}      +   \eta C_g^2   +   \eta G^2  +   \frac{C_g  \sqrt{3\log\frac{2}{\delta}} }{\sqrt{N}}  \pare{\lambda^* + 2\sqrt{2}\rho  } +\frac{\sqrt{2}\rho  (1+\sqrt{2}\rho + \lambda^*) G\sqrt{3\log\frac{2}{\delta}}}{\sqrt{N}}  \nonumber\\
          &  +  \epsilon_0 \pare{\lambda^* + \sqrt{2}\rho  } +  \pare{\frac{C_g   \sqrt{3\log\frac{2}{\delta}}}{\sqrt{N}}    + \epsilon_0 }    \frac{ \sqrt{N}  }{2(c_\eta G^2 + \frac{1}{c_\eta})} \hat G\norm{\bar\theta_T  -  \hat{\theta} }. 
        \end{align}

        Notice the following decomposition: $
            f(\bar\theta_T)- f(\theta^* ) \geq f(\bar\theta_T)- f(\hat{\theta}  ) \geq -\hat G \norm{\bar\theta_T - \hat{\theta}  }.$
    Also notice the fact $(a-b)^2 \geq \frac{1}{2}a^2 - b^2$ holding for any $a>0, b>0$, we know 
    \begin{align*}
        \pare{(r-2H\epsilon_0)\norm{\bar\theta_T  -  \hat{\theta} }-\Delta}^2  \geq  \frac{1}{2}(r-2H\epsilon_0)^2\norm{\bar\theta_T  -  \hat{\theta} }^2 -\Delta^2.
    \end{align*}
    Putting pieces together yields the following inequality:
    \begin{align*}
       &a \norm{\bar\theta_T - \hat{\theta} }^2 -b \norm{\bar\theta_T - \hat{\theta} } - c \leq 0,\\
       \text{where:}&\\
       & a =\frac{ \sqrt{N}(r-2H\epsilon_0)^2}{4(c_\eta  G^2 + \frac{1}{c_\eta})}   \\
       & b = \frac{ G}{(c_\eta G^2 + \frac{1}{c_\eta})}   \pare{ C_g   \sqrt{3\log\frac{2}{\delta}}     + \sqrt{N}\epsilon_0 }    +   G,\\
       &c = \frac{\frac{\rho^2}{c_{\eta}} + c_{\eta}(  C_g^2 + G^2) + C_g \sqrt{3\log\frac{2}{\delta}} \pare{\lambda^* + 2\sqrt{2}\rho  }+\sqrt{2}\rho  (1+\sqrt{2}\rho + \lambda^*) G\sqrt{3\log\frac{2}{\delta}}}{\sqrt{N}}   \\
       &\quad +  \epsilon_0 \pare{\lambda^* + \sqrt{2}\rho  } +       \sqrt{N}\frac{    \Delta^2}{2(c_\eta G^2 + \frac{1}{c_\eta})}  .
    \end{align*}
     
    Assume $ \epsilon_0 \leq  \frac{C_g^2  {3\log\frac{2}{\delta}} }{\sqrt{N}} $,
    so $ b \leq   \frac{2  G C_g   \sqrt{3\log\frac{2}{\delta}} }{(c_\eta  G^2 + \frac{1}{c_\eta})}   +  G. $
    Solving the above quadratic inequality yields:
   { \small \begin{align*}
       & \norm{\bar\theta_T - \hat{\theta} } \leq \frac{b+\sqrt{b^2+4ac}}{2a} \leq  \frac{b}{a} + \sqrt{\frac{c}{a}}\\
        &\leq   \frac{2 G C_g   \sqrt{3\log\frac{2}{\delta}} }{\sqrt{N}(r-2H\epsilon_0)^2 }    + \frac{4 G (c_\eta G^2 + \frac{1}{c_\eta})}{ \sqrt{N}(r-2H\epsilon_0)^2} +  \frac{\Delta}{2 (r-2H\epsilon_0) }  \\
        &+\sqrt{\frac{4(c_\eta G^2 + \frac{1}{c_\eta})}{ \sqrt{N}(r-2H\epsilon_0)^2 } } \sqrt{\frac{\frac{\rho^2}{c_{\eta}} + c_{\eta}( C_g^2 + G^2) +   (C_g\pare{\lambda^* + 2\sqrt{2}\rho  }+\sqrt{2}\rho  (1+\sqrt{2}\rho + \lambda^*) G)\sqrt{3\log\frac{2}{\delta}}}{\sqrt{N}}    }\\   
        &+   \sqrt{\frac{4(c_\eta G^2 + \frac{1}{c_\eta})}{ \sqrt{N}(r-2H\epsilon_0)^2 } }  \epsilon_0 \pare{\lambda^* + \sqrt{2}\rho  }   \\
          &=  \frac{2 G C_g   \sqrt{3\log\frac{2}{\delta}} + 6 G (c_\eta  G^2 + \frac{1}{c_\eta}) }{\sqrt{N}(r-2H\epsilon_0)^2 }      +  \frac{\Delta}{2 (r-2H\epsilon_0) }   \\
        &   +  \frac{\frac{\rho^2}{c_{\eta}} + c_{\eta}(C_g^2 + G^2) + C_g \sqrt{3\log\frac{2}{\delta}} \pare{\lambda^* + 2\sqrt{2}\rho  }+\sqrt{2}\rho  (1+\sqrt{2}\rho + \lambda^*) G\sqrt{3\log\frac{2}{\delta}}}{2\sqrt{N}} +  \epsilon_0 \pare{\lambda^* + \sqrt{2}\rho  }  \\ 
    \end{align*}}
    where at the last step we use the fact $\sqrt{ab} \leq \frac{a^2+b^2}{2}$.
    Finally, note the following decomposition:
    \begin{align*}
      f(\hat{\theta} )-  f(\theta^* )  &=  f(\hat{\theta} )-f(\bar\theta_T) +f(\bar\theta_T) - f(\theta^* ) \\
        &\leq  G\norm{ \bar\theta_T - \hat{\theta}  } +f(\bar\theta_T) - f(\theta^* ) \\
         &\leq \pare{ G  +  \frac{ 2C_g   \sqrt{3\log\frac{2}{\delta}}}{c_\eta G^2 + \frac{1}{c_\eta}} G}\norm{ \bar\theta_T - \hat{\theta}  } +  \epsilon_0\pare{\lambda^* + \sqrt{2}\rho  }\\
         & + \frac{\frac{\rho^2}{c_{\eta}} + c_{\eta}(  C_g^2 + G^2) + (C_g  \pare{\lambda^* + 2\sqrt{2}\rho  }+\sqrt{2}\rho  (1+\sqrt{2}\rho + \lambda^*) G)\sqrt{3\log\frac{2}{\delta}}}{\sqrt{N}}   .
    \end{align*}
    
 Recalling $\epsilon_0 \leq \frac{C_g \sqrt{3\log\frac{2}{\delta}}}{\pare{\lambda^* + \sqrt{2}\rho  }\sqrt{N}}$  and  plugging bound of $\norm{ \bar\theta_T - \hat{\theta}  }$  yields:
    \begin{align*}
        &f(\hat{\theta} )- f(\theta^* )  \\
          &\leq \pare{ G  +  \frac{ 2C_g  \sqrt{3\log\frac{2}{\delta}}}{c_\eta G^2 + \frac{1}{c_\eta}} G}\pare{  \frac{2  G C_g   \sqrt{3\log\frac{2}{\delta}} + 6  G (c_\eta G^2 + \frac{1}{c_\eta}) }{\sqrt{N}(r-2H\epsilon_0)^2 }      +  \frac{\Delta}{2 (r-2H\epsilon_0) } } \\
         &  +  \pare{1+ G  +  \frac{  C_g  \sqrt{ \log\frac{2}{\delta}}}{c_\eta G^2 + \frac{1}{c_\eta}} G}\pare{ \frac{\frac{\rho^2}{c_{\eta}} + c_{\eta}(  C_g^2 + G^2) + 2C_g \sqrt{ \log \frac{2}{\delta}} \pare{\lambda^* +  \rho  }+ \rho G\sqrt{ \log\frac{2}{\delta}}}{\sqrt{N}}}.
         \end{align*}

Now we simplify the above bound. By the definition of $c_{\eta}$ we know $c_{\eta} \leq  \frac{1}{ G}$  , so we have

 \begin{align*}
        &f(\hat{\theta} )- f(\theta^* )  \\
          &\leq \pare{2 G  +  \frac{ 2C_g  \sqrt{3\log\frac{2}{\delta}}}{c_\eta G  +1}  }\pare{  \frac{2 G C_g   \sqrt{3\log\frac{2}{\delta}} + 6  G (c_\eta G^2 + \frac{1}{c_\eta}) }{\sqrt{N}(r-2H\epsilon_0)^2 }      +  \frac{\Delta}{2 (r-2H\epsilon_0) } } \\
         &  +  \pare{2 G  +  \frac{  2C_g  \sqrt{3 \log\frac{2}{\delta}}}{c_\eta  G  + 1}  }\pare{ \frac{\frac{\rho^2}{c_{\eta}} + c_{\eta}(  C_g^2 + G^2) + (C_g   \pare{\lambda^* +  \rho  }+ \rho G)\sqrt{ \log\frac{2}{\delta}}}{\sqrt{N}}}.
         \end{align*}

    Finally, by definition of $c_{\eta}$ we know $\frac{\rho^2}{c_{\eta} }  \geq c_{\eta} C_g^2$, $\frac{\rho^2}{c_{\eta}} \geq \rho C_g$, $\frac{\rho^2}{c_{\eta}} \geq \rho { 12\sqrt{6}    G \sqrt{ \log\frac{2}{\delta}} }  $ and $\frac{G/c_{\eta}}{(r-2H\epsilon_0)^2} \geq c_\eta G^2 $  which concludes the proof:

\begin{align*}
        &f(\hat{\theta} )- f(\theta^* )  \lesssim \pare{ G  +     C_g  \sqrt{ \log\frac{1}{\delta}}  }\pare{  \frac{  G C_g   \sqrt{ \log\frac{1}{\delta}} +     \frac{ G}{c_\eta}  }{\sqrt{N}(r-2H\epsilon_0)^2 }      +  \frac{\Delta}{  (r-2H\epsilon_0) } + \frac{\frac{\rho^2}{c_{\eta}}  +\lambda^*  C_g   \sqrt{ \log\frac{1}{\delta}}}{\sqrt{N}} } 
         \end{align*}
Plugging the bound that $\lambda^*\leq G/r$ (see~\cite[Remark~1]{mahdavi2012stochastic}) will conclude the proof.   
     \end{proof}

\subsection{Proof of Theorem~\ref{thm:opt rate}}
\begin{proof}

Now we proceed to proving Theorem~\ref{thm:opt rate}. We split the proof into two parts: proof of optimization guarantee (the gradient complexity) and proof of statistical rate.
\paragraph{Convergence rate}
The total gradient complexity of Algorithm~\ref{algorithm: hat theta} will be the sum of the complexities of each calling of procedure $\CPSolver$. Theorem~\ref{thm:cpsolver rate} gives the complexity of $\CPSolver$, and what we need to compute is $\epsilon_0$ in each call. Recall that, $\epsilon_0$ is the error between the true constraint and the approximate we actually used in $\CPSolver$. Hence, we first compute those error as follows:
\begin{align*}
    |\hat{g}'(\theta,\alpha') -  \hat{g}'(\theta,\alpha')| &\leq |\hat g_{1,T}(\theta) -  g_{1,T}(\theta) |\leq \hat{R}_{\varphi, \mu_{1,T}}(\hat{\theta}_{T,\alpha-\epsilon_{0,T}}) - \hat{R}_{\varphi, \mu_{1,T}}(\hat{\theta}^*_{T,\alpha-\epsilon_{0,T}}) \\
    &\leq \epsilon_{T,\alpha-\epsilon_{0,T}},\\
    |\hat{g}_{\hat{\alpha}}(\theta ) -  {g}_{\hat{\alpha}_S}(\theta )| &\leq  \max\cbr{|\hat\alpha - \hat{\alpha}_S|,|\hat g_{1,T}(\theta) -  g_{1,T}(\theta) |} \leq \max\cbr{ \epsilon_{\hat{\alpha}_S},  \epsilon_{T,\alpha-\epsilon_{0,T}}},\\
    |\hat{g}_{S,T}(\theta) - g_{S,T}(\theta)|&\leq \max\cbr{ |\hat{g}_{\hat{\alpha}}(\theta ) -  {g}_{\hat{\alpha}_S}(\theta )|, |\hat g'_T(\theta) - g'_T(\theta)| }\\
   & \leq  \max\cbr{ \epsilon_{\hat{\alpha}_S},  \epsilon_{T,\alpha-\epsilon_{0,T}}, \hat{R}_{1,T}(\hat\theta^*_{T,\hat\alpha_S}) - \hat{R}_{1,T}(\hat\theta_{T,\hat\alpha_S})   } \\
   &\leq  \max\cbr{ \epsilon_{\hat{\alpha}_S},  \epsilon_{T,\alpha-\epsilon_{0,T}}, \epsilon'_{T} }.
\end{align*}

Now we verify that the choice of the tolerance in Algorithm~\ref{algorithm: hat theta} can ensure that $$\hat{R}_{\varphi,\mu_{1,S}}(\tilde \theta) - \min_{\theta: g_{S,T}(\theta)\leq 0} \hat{R}_{\varphi,\mu_{1,S}}( \theta) \leq \epsilon_{1,S}=\epsilon_{S,T}.$$
To ensure $\CPSolver\!\left(
   \hat R_{\varphi,\mu_{1,S}},\ \hat g_{S,T},\ \xi(\epsilon_{S,T}, r_{S,T}),\ \epsilon_{S,T}
 \right)$ outputs $\epsilon_{S,T}$-accurate solution, by the condition in Theorem~\ref{thm:cpsolver rate}, we need $|\hat{g}_{S,T}(\theta) - g_{S,T}(\theta)| \leq \xi(\epsilon_{S,T}, r_{S,T})$, hence we require  
 \begin{align}
      \epsilon_{\hat{\alpha}_S} \leq \xi(\epsilon_{S,T}, r_{S,T}) \label{eq:alpha bound1}
      , \\
      \epsilon_{T,\alpha-\epsilon_{0,T}}\leq \xi(\epsilon_{S,T}, r_{S,T}), \label{eq:T bound1}\\
      \epsilon'_{T}  \leq \xi(\epsilon_{S,T}, r_{S,T})
 \end{align}
 We also require $\epsilon'_T \leq \epsilon_{1,T}$, so we know choosing $\epsilon'_{T} = \min\cbr{\xi(\epsilon_{S,T}, r_{S,T}),  \epsilon_{1,T}}$ suffices.

 To ensure $\CPSolver\!\left(
   \hat R_{\varphi,\mu_{1,T}},\ \hat g_{\hat{\alpha}},\ \xi(\epsilon'_T, r_T'),\ \epsilon'_T
 \right)$ output $\epsilon'_T$-accurate solution, we need $|\hat{g}_{\hat{\alpha}}(\theta ) -  {g}_{\hat{\alpha}_S}(\theta )| \leq \xi(\epsilon'_T, r'_{T})$, hence we require 
 \begin{align}
     \epsilon_{\hat{\alpha}_S}\leq \xi(\epsilon'_T, r'_{T}),\label{eq:alpha bound2}\\
     \epsilon_{T,\alpha-\epsilon_{0,T}}\leq \xi(\epsilon'_T, r'_{T}) \label{eq:T bound2} 
 \end{align}
 
 To ensure $\CPSolver\!\left(
   \hat R_{\varphi,\mu_{1,S}},\ \hat g_{\hat{\alpha}},\ \xi(\epsilon'_S, r_S'),\ \epsilon'_S
 \right)$ output $\epsilon'_S = \epsilon_{1,S}$-accurate solution, we need $|\hat{g}_{\hat{\alpha}}(\theta ) -  {g}_{\hat{\alpha}_S}(\theta )| \leq \xi(\epsilon'_S, r'_{S})$, hence we require 
 \begin{align}
     \epsilon_{\hat{\alpha}_S}\leq \xi(\epsilon'_S, r'_{S}),\label{eq:alpha bound3}\\
     \epsilon_{T,\alpha-\epsilon_{0,T}}\leq \xi(\epsilon'_S, r'_{S})  \label{eq:T bound3}
 \end{align} 

 To ensure $\CPSolver\!\left(
  \alpha',\ \hat g'(\theta,\alpha'),\ \xi(\epsilon_{\hat\alpha_S}, r_{\hat\alpha_S}),\ \epsilon_{\hat\alpha_S}
\right)$ output $\epsilon_{\hat{\alpha}_S}$-accurate solution, we need $|\hat{g}_{1,T}(\theta ) -  {g}_{1,T}(\theta )| \leq \xi(\epsilon_{\hat\alpha_S}, r_{\hat\alpha_S})$, hence we require 
 \begin{align} 
     \epsilon_{T,\alpha-\epsilon_{0,T}}\leq \xi(\epsilon_{\hat\alpha_S}, r_{\hat\alpha_S}) \label{eq:T bound4} 
 \end{align} 

 From~\eqref{eq:alpha bound1},~\eqref{eq:alpha bound2} and~\eqref{eq:alpha bound3} we know that choosing $\epsilon_{\hat{\alpha}_S} \leq \min\cbr{\xi(\epsilon'_{S,T}, r'_{S,T}), \xi(\epsilon'_T, r'_{T}),\xi(\epsilon'_S, r'_{S}) }$ suffices and from~\eqref{eq:T bound1},~\eqref{eq:T bound2},~\eqref{eq:alpha bound3} and~\eqref{eq:T bound4} we know that choosing $\epsilon_{T,\alpha-\epsilon_{0,T}} \leq \min\cbr{\xi(\epsilon'_{S,T}, r'_{S,T}), \xi(\epsilon'_T, r'_{T}),\xi(\epsilon'_S, r'_{S}),\xi(\epsilon_{\hat\alpha_S}, r_{\hat\alpha_S}) }$ suffices.

 The complexity immediately follows by plugging error tolerances into Theorem~\ref{thm:cpsolver rate}.

\paragraph{Statistical Rate}
If $\hat R_{\varphi,\mu_{1,S}}(\tilde\theta) - \hat R_{\varphi,\mu_{1,S}}(\hat{\theta}'_S) > 2\epsilon_{1,S}$, then we know 
\begin{align*}
  \min_{\theta: g_{S,T}(\theta)\leq 0}  \hat R_{\varphi,\mu_{1,S}}( \theta) - \min_{\theta:g_{\hat{\alpha}_S}\leq \epsilon_{1,S}}\hat R_{\varphi,\mu_{1,S}}(\theta)  \geq \hat R_{\varphi,\mu_{1,S}}(\tilde\theta) - \hat R_{\varphi,\mu_{1,S}}(\hat{\theta}'_S)-\epsilon_{1,S} \geq \epsilon_{1,S}
\end{align*}

Hence the set $\cbr{\theta: g_{\hat{\alpha}_S}(\theta) \leq 0, g_T'(\theta)\leq 0}$   does not intersect with $\cbr{\theta: g_{\hat{\alpha}_S}(\theta) \leq 0, g_S'(\theta)\leq \epsilon_{1,S}}$ which is $ \{h\in\hat{\mathcal{H}}':\hat{R}_{\varphi,\mu_{1,S}}(h)\leq \hat{R}^*_{\varphi,\mu_{1,S}}(\hat{\mathcal{H}}')+ 3\epsilon_{1,S}\} $ (a slightly inflated version of $\hat{\mathcal{H}}'_{1,S}$)  , so we output $\hat{\theta}'_T$.

Otherwise, $\hat R_{\varphi,\mu_{1,S}}(\hat\theta) - \hat R_{\varphi,\mu_{1,S}}(\hat{\theta}'_S) \leq 2\epsilon_{1,S}$, then 
\begin{align*}
  \min_{\theta: g_{S,T}(\theta)\leq 0}  \hat R_{\varphi,\mu_{1,S}}( \theta)  - \min_{\theta:g_{\hat{\alpha}_S}\leq \epsilon_{1,S}}\hat R_{\varphi,\mu_{1,S}}(\theta) \leq  \hat R_{\varphi,\mu_{1,S}}( \tilde \theta)  - \hat R_{\varphi,\mu_{1,S}}(\hat{\theta}'_S) - \epsilon_{1,S} \leq  \epsilon_{1,S},
\end{align*} 
which implies that the set $\cbr{\theta: g_{\hat{\alpha}_S}(\theta) \leq 0, g_T'(\theta)\leq 0}$ ($\hat{\mathcal{H}}'_T$) intersects with $\cbr{\theta: g_{\hat{\alpha}_S}(\theta) \leq 0, g_S'(\theta)\leq - \epsilon_{1,s}}$ which is $ \{h\in\hat{\mathcal{H}}':\hat{R}_{\varphi,\mu_{1,S}}(h)\leq \hat{R}^*_{\varphi,\mu_{1,S}}(\hat{\mathcal{H}}')+  \epsilon_{1,S}\} $ (a slightly shrinked version of $\hat{\mathcal{H}}'_{1,S}$  ), so we output a model $\tilde \theta$ in their intersection. 

Hence the output of our optimization procedure is consistent with the learning algorithm~\eqref{procedure_intersection}, and the same generalization analysis applies here.
\end{proof}

\end{document}